%% file: acl_latex.tex
\documentclass[11pt]{article}

\usepackage[preprint]{acl}

\usepackage{times}
\usepackage{latexsym}

\usepackage[T1]{fontenc}

\usepackage[utf8]{inputenc}

\usepackage{microtype}

\usepackage{inconsolata}

\usepackage{graphicx}

\usepackage{booktabs}
\usepackage{multirow}
\usepackage{bm}
\usepackage{amsmath,amssymb, amsthm}
\usepackage{makecell}
\usepackage{xspace}

\usepackage{algorithm}
\usepackage{algorithmicx}
\usepackage{algpseudocode}
\usepackage{hyperref}
\usepackage{xurl}

\usepackage{subcaption}
\usepackage{setspace}
\usepackage[table]{xcolor}
\definecolor{cvprblue}{rgb}{0.21,0.49,0.74}

\usepackage[most]{tcolorbox}

\newtheorem{theorem}{Theorem}

\newtheorem{corollary}[theorem]{Corollary}
\newtheorem{proposition}[theorem]{Proposition}

\title{Harnessing Generalist Agents for Contextualized Time Series}

\author{
 \textbf{Zihao Li}, 
 \textbf{Kaifeng Jin}, 
 \textbf{Yuanchen Bei},
 \textbf{Jiaru Zou},
 \textbf{Avaneesh Kumar},
 \textbf{Xuying Ning}
 \\
 \textbf{Yanjun Zhao}, 
 \textbf{Mengting Ai},
 \textbf{Baoyu Jing}, 
 \textbf{Hanghang Tong},
 \textbf{Jingrui He}
\\
\\
 \textsuperscript{}University of Illinois Urbana-Champaign
\\
 \small{
   \textbf{Correspondence:} \href{mailto:zihaoli5@illinois.edu}{zihaoli5@illinois.edu}, \href{mailto:jingrui@illinois.edu}{jingrui@illinois.edu}
 }
}

\newcommand{\method}{\textsc{TimeClaw}\xspace}

\begin{document}
\maketitle
\begin{abstract}
Time series are often embedded in rich contexts that are essential for holistic modeling. Moreover, real-world practitioners often require end-to-end workflows for analyzing temporal dynamics, where widely studied tasks such as forecasting are only one step in a broader solution loop. While generalist AI agents offer a promising interface for such workflows under complex contexts, they still operate primarily in textual spaces that are not fully aligned with structured temporal signals. 
In this work, we introduce \method, an agentic harness framework for time series that equips generalist LLM agents with the time-series-native runtime support needed for contextualized temporal reasoning. \method integrates executable temporal tools for grounded and auditable analysis, experience-driven capability evolution for creating reusable analytical routines, and episodic multimodal memory for retrieving relevant reasoning traces. Together, these components unlock harnessed open-ended temporal reasoning with contextual information.
Extensive evaluation on multiple benchmarks covering diverse tasks across energy, finance, weather, traffic, and other real-world domains demonstrates improved performance of \method. Code is available at this \textcolor{cvprblue}{\href{https://github.com/iDEA-iSAIL-Lab-UIUC/TimeClaw}{repository hyperlink}}.
\end{abstract}

\addtocontents{toc}
{\protect\setcounter{tocdepth}{-1}}

\input{0_sections/1_introduction}

\input{0_sections/2_preliminary}

\input{0_sections/4_method}

\input{0_sections/5_experiment}

\input{0_sections/6_related_work}

\input{0_sections/7_conclusion}


\bibliography{reference}

\clearpage

\input{1_appendix/appendix}

\end{document}

%% file: 0_sections/1_introduction.tex
\section{Introduction}
Time series provide a fundamental lens for observing complex real-world system dynamics and supporting decisions over time \citep{DBLP:conf/kdd/JiangNPSNYSCNS25,wu2023timesnet,goswami2024moment}. Financial time series capture evolving market and macroeconomic conditions, guiding investment and risk-management decisions \citep{DBLP:journals/csur/ArsenaultWP25}. Healthcare time series reflect the dynamic state of the human body, enabling early detection of potential illnesses and personalized treatment planning \citep{jarrett2021clairvoyance,NIPS2016_231141b3}.
Recently, the rapid development of large language models (LLMs) and agentic AI offers new opportunities to move beyond conventional time-series modeling toward more holistic temporal understanding and reasoning~\citep{chang2025survey, li2026r}. 
This opportunity arises from two practical needs. 
First, most existing methods model time series as standalone numerical sequences, whereas real-world temporal signals are often accompanied by external contexts that are crucial for holistic modeling and can be explicitly leveraged by agents \citep{DBLP:journals/corr/abs-2502-08942, chang2026time}. 
Second, existing studies largely focus on predefined tasks, with forecasting being the most representative~\citep{goswami2024moment}, whereas real-world applications often require broader solution loops where such tasks are only intermediate steps \citep{jin2024position, DBLP:journals/corr/abs-2601-18744}.

\begin{figure}[t]
\centering
\includegraphics[width=\linewidth]{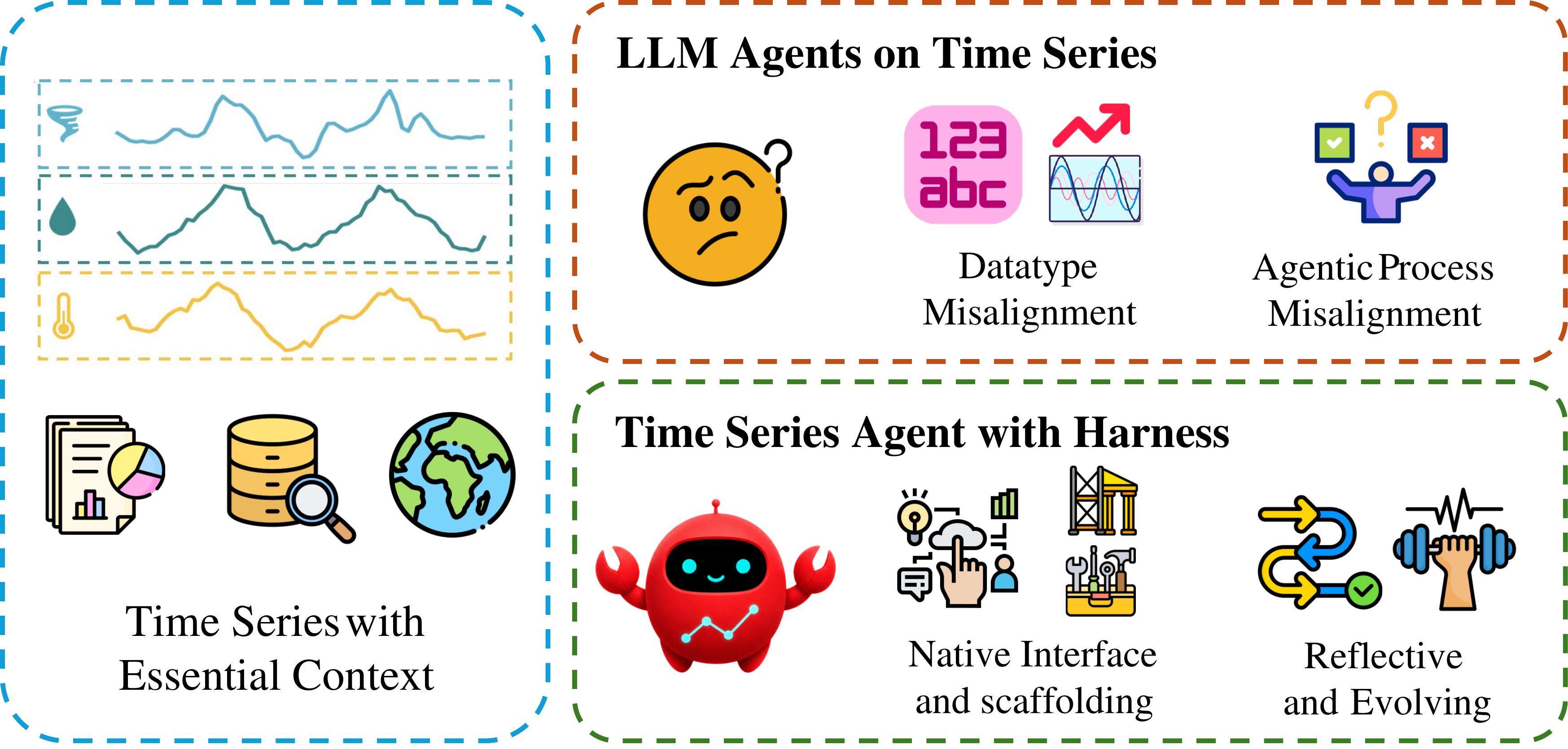}
\caption{
Contextualized time series require more than task-specific models or directly prompted LLM agents. 
\method provides a time-series-native agent harness that leverages context to support holistic modeling.
}
\label{fig:teasing}
\vspace{-5mm}
\end{figure}

However, the generality of LLMs alone does not provide the scaffold needed for reliable time-series reasoning~\citep{tan2024are,merrill-etal-2024-language}. Directly inserting time series data as ordinary text for the LLM agents remains insufficient and causes two misalignments.
The first is \textit{datatype misalignment}: temporal signals encode statistical properties such as trends and periodicity that can be distorted when serialized into tokens and processed through language-oriented attention mechanisms \citep{jin2024timellm,schwartz2024numerologic}. 
The second is \textit{agentic process misalignment}: existing agentic workflows are primarily designed for text-centric information processing \citep{DBLP:conf/iclr/YaoZYDSN023}. Long numerical sequences may further burden the context window and distract the agent from reliable temporal reasoning \citep{tan2024are}. 

To address such misalignments, agent harnesses have emerged as an effective way to equip LLM agents with reliable interfaces for perceiving and acting over non-text modalities \citep{meng2026agent, ning2026code}. Yet existing harness designs rarely account for the unique structure of time series.
This motivates our central research question:
\begin{center}
\textit{How to build time-series-native agent harness?}
\end{center}

To answer this question, we introduce \method, an agentic harness framework for contextualized time-series understanding, reasoning, and decision-making. Specifically, \method integrates executable temporal tools, experience-driven capability evolution, and episodic multimodal memory.
The executable temporal tools enable agents to inspect, transform, analyze, and act upon time series as structured temporal objects through auditable workflows. To address datatype misalignment, \method operates on time-series-native runtime instead of serialized text. The experience-driven capability evolution mechanism allows the agent to expand its temporal reasoning abilities over time and address agentic process misalignment. The episodic multimodal memory module retrieves relevant past reasoning traces to guide new tasks with similar temporal patterns or contexts. 
Extensive experiments across diverse benchmarks and real-world domains show that \method consistently improves end-to-end performance while maintaining token efficiency.
Our contributions are summarized as follows:

\begin{itemize}

\item We identify the limitations of directly using LLMs for time-series reasoning, showing that textualizing temporal signals induces data-type and agentic-process misalignments.

\item We introduce \method, a time-series-native agentic harness that integrates executable tools, experience-driven capability evolution, and episodic multimodal memory.

\item We conduct extensive experiments across diverse real-world domains, demonstrating the effectiveness of \method for end-to-end reasoning over contextualized time series.

\end{itemize}

%% file: 0_sections/2_preliminary.tex
\section{Background}
\label{sec:preliminary}

\begin{figure*}[t]
\centering
\includegraphics[width=\linewidth]{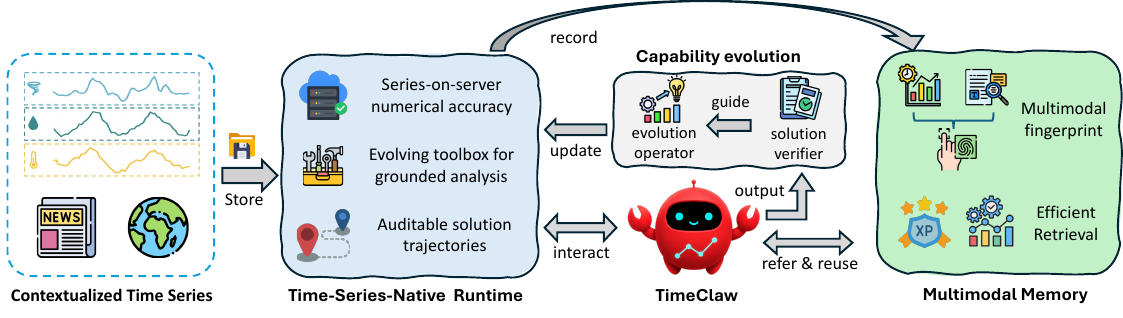}
\caption{
Overview of \method. 
Given contextualized time series, \method operates through a time-series-native runtime that provides server-side numerical execution, an evolving toolbox for grounded analysis, and auditable solution trajectories (Section \ref{sec:tools}).
The agent further improves over time through capability evolution (Section \ref{sec:evolve}) and retrieves relevant experience from multimodal memory via multimodal fingerprints (Section \ref{sec:memory}).
}
\label{fig:main}
\vspace{-3mm}
\end{figure*}

\subsection{Preliminary}

We use calligraphic letters (e.g., $\mathcal{A}$) for sets and bold capital letters for matrices (e.g., $\bm{A}$). For matrix indices, $\bm{A}[i, j]$ denotes the entry in the $i^{\textrm{th}}$ row and the $j^{\textrm{th}}$ column. For a vector $\bm{v}$, $v[i:j]$ represents the sub-vector sliced from the $i^{\textrm{th}}$ to the $j^{\textrm{th}}$ position, inclusively. $\bm{A}[i, :]$ returns the $i^{th}$ row in $\bm{A}$ and $\bm{A}[:i]$ returns the first $i$ rows of $\bm{A}$.

\paragraph{Time Series and Contextualized Time Series.} A \textit{numerical time series} is denoted as
\begin{equation}
    \mathbf{X}_{1:T} = [\mathbf{x}_1, \mathbf{x}_2, \ldots, \mathbf{x}_T] \in \mathbb{R}^{N \times T},
\end{equation}
where $T$ is the number of timestamps, $N$ is the number of variables, and $\mathbf{x}_t \in \mathbb{R}^{N}$ is the observation at timestamp $t$. For $1 \leq a \leq b \leq T$, we denote the temporal slice from timestamp $a$ to $b$ as
\begin{equation}
    \mathbf{X}_{a:b}
    =
    [\mathbf{x}_a, \mathbf{x}_{a+1}, \ldots, \mathbf{x}_b].
\end{equation}
We define a \textit{contextualized time series} as
\begin{equation}
    \mathcal{X}
    =
    (\mathbf{X}_{1:T}, \mathcal{C}),
\end{equation}
where $\mathbf{X}_{1:T}$ is the numerical temporal signal and $\mathcal{C}$ is the associated context. The context $\mathcal{C}$ may contain information from various sources, such as textual descriptions, categorical metadata, external constraints, and domain knowledge. In this work, we focus on the text-time series mixture, a special case of contextualized time series. This setting is broadly representative of many real-world temporal reasoning scenarios where textual cues provide essential information for interpreting temporal dynamics and supporting decisions. Yet the core methodology is generalizable across additional context sources beyond natural language.
\paragraph{From Predefined Tasks to Open-Ended Temporal Reasoning.} Conventional time-series learning is usually defined by fixed task mappings, such as $\mathbf{X}_{1:T} \mapsto \mathbf{X}_{T+1:T+H}$ for forecasting or $\mathbf{X}_{1:T} \mapsto y$ for classification. We define \textit{open-ended contextualized temporal reasoning} as a more general setting where the task objective and output format are specified by a natural-language instruction, and the numerical time series is interpreted together with its associated context. Given a contextualized time series
$\mathcal{X}=(\mathbf{X}_{1:T},\mathcal{C})$, we define a task instance as
\begin{equation}
    \tau = (q, \mathcal{X}, \mathcal{Y}, y^\star, \ell),
\end{equation}
where $q$ is the instruction, $\mathcal{Y}$ is the output space, $y^\star \in
\mathcal{Y}$ is the target, and $\ell:\mathcal{Y}\times\mathcal{Y}\to\mathbb{R}$
is the evaluation function. The goal is to produce
\begin{equation}
    \hat{y}=f(q,\mathcal{X})\in\arg\min_{y\in\mathcal{Y}}\ell(y,y^\star).
\end{equation}
Here $\mathcal{Y}$ may include numerical predictions, labels, rankings, textual
explanations, decisions, executable actions, or their combinations.

\subsection{Why NOT Fully Trust LLMs?}
\textbf{LLM Workflow for Time Series.}
Existing LLM-based agents usually process contextualized time series through
textual serialization. Given $\mathcal{X}=(\mathbf{X}_{1:T},\mathcal{C})$, a
serialization function $\sigma$ maps the numerical series and context to text
$s_\mathcal{X}=\sigma(\mathcal{X})$. The LLM then tokenizes the instruction and serialized input as
\begin{equation}
    \mathbf{z}_{1:L}=\operatorname{Tokenize}([q;s_{\mathcal{X}}]),
\end{equation}
\textbf{Limitations.}
This text-centric workflow introduces two forms of misalignment. First,
\textit{data-type misalignment} arises because numerical temporal signals are
converted into discrete text tokens. As a result, numerical proximity, temporal
resolution, long-range dependencies, and fine-grained variations may not be
faithfully preserved under tokenization and finite context budgets. Second,
\textit{agentic-process misalignment} arises because the agent reasons over
serialized text rather than native temporal objects. Operations that are natural
for time series, such as slicing, aggregation, decomposition, smoothing,
forecasting, anomaly localization, and numerical verification, become indirect
language-based reasoning steps. Due to page limitations, we provide a more detailed analysis of these
limitations in Appendix~\ref{ap:analysis}, including numerical-token
distance mismatch, decimal place-value distortion, next-token versus next-value
objective mismatch, and long-context temporal information dilution. 

%% file: 0_sections/4_method.tex
\section{Time-Series-Native Agent Harness}
\label{sec:method}

We address the misalignment between language agents and time-series tasks through an \textit{agent harness}~\cite{ning2026code}: a time-series-native runtime that surrounds a frozen LLM policy and reshapes how it interacts with contextualized temporal data. Instead of forcing numerical signals into the token stream, the harness hosts temporal objects in a structured runtime environment and exposes them only through typed executable interfaces. The agent analyzes them through auditable temporal operations, while intermediate analytical states are maintained in a structured workspace.

\subsection{Harness Framework Overview}
\label{sec:overview}

\method instantiates this harness through three coordinated components. \textit{Executable temporal tools} (Section~\ref{sec:tools}) provide typed interfaces for accessing and analyzing time-series objects, producing grounded and auditable analytical evidence. \textit{Experience-driven capability evolution} (Section~\ref{sec:evolve}) abstracts recurring procedures as useful temporal routines to be reused rather than repeatedly rediscovered. \emph{Episodic multimodal memory} (Section~\ref{sec:memory}) stores past reasoning traces and retrieves relevant experience through complementary textual and temporal modalities.
Formally, \method augments a frozen LLM policy $\pi_\theta$ with a harness $\mathcal{H}$:
\begin{equation}
    \mathcal{H} = (\mathcal{T}, \mathcal{E}, \mathcal{M} ),
\end{equation}
where $\mathcal{T}$ is the set of executable temporal tools, $\mathcal{E}$ is the capability-evolution loop and $\mathcal{M}$ is the episodic multimodal memory.

\paragraph{Workspace and rollout.}
For each task $\tau=(q,\mathcal{X},\mathcal{Y},y^\star,\ell)$, $\mathbf{X}_{1:T}$ is loaded into a task-local \emph{workspace} $\mathcal{W}$ hosted by the harness runtime that
can be accessed through protocols (e.g., Model Context Protocol \citep{mcp2024}). This
workspace preserves the series at full numerical precision. The policy interacts with the harness through a
trajectory
\begin{equation}
    \zeta=(a_1,o_1,\ldots,a_K,o_K,\hat{y}),
\end{equation}
where each action $a_r\in\mathcal{A}$ denotes an agent decision, such as executing a runtime operation or performing a reasoning step. The corresponding observation $o_r\in\mathcal{O}$ records the outcome of this decision, either from the harness or the agent's internal reasoning. The policy selects
\begin{equation}
    a_r\sim\pi_\theta(\cdot\mid q,\mathcal{C},o_{1:r-1}),
\end{equation}
conditioned on the instruction, context, and accumulated evidence, ultimately emitting $\hat{y}\in\mathcal{Y}$.

\subsection{Runtime-Native Temporal Tools}
\label{sec:tools}

Despite differing in domains and objectives, time-series tasks often rely on a shared list of common temporal operations. Unlike conventional tool-augmented agents that must pass data through language messages, \method executes these operations inside the harness runtime, where the full time-series object already resides in the workspace $\mathcal{W}$. Thus, the agent only specifies the intended operation and its parameters without the need to spend context budget on tool interaction. Each tool runs directly on $\mathcal{W}$ at numerical precision and returns a compact structured observation. We initialize $\mathcal{T}$ with common time series tools and allow new tools to emerge through capability evolution, as summarized in Table~\ref{tab:tools}.

\paragraph{Auditable trajectories as reusable experience.}
The harness requires every numerical claim to be grounded in a returned tool observation rather than inferred from free-form language reasoning.
Each tool-related pair $(a_r,o_r)$ in the rollout also records the analytical intent behind that invocation. The resulting evidence chain ties the final answer $\hat{y}$ to verifiable temporal operations and forms a reusable problem-solving trace, which further supports capability evolution and episodic memory.

\subsection{Experience-Driven Capability Evolution}
\label{sec:evolve}

Runtime-native tools cover common temporal primitives, but real workflows often reveal recurring task-specific routines that cannot all be anticipated at design time.
The second facet of the harness turns such repeated experience into executable capability. Specifically, an evolution operator $\mathcal{E}$ identifies verified recurring sub-procedures and admits them into the tool set $\mathcal{T}$. As a result, $\mathcal{T}$ evolves and expands with the target task distribution.

\paragraph{Capability evolution mechanism.}
We implement the evolution operator $\mathcal{E}$ as a code-specialized LLM. Given clustered trajectories and task descriptions, $\mathcal{E}$ abstracts recurring analytical sub-procedures into self-contained executable tools, each with an invocation docstring and an implementation that operates on the runtime workspace $\mathcal{W}$. Each candidate tool is verified by execution on a held-out subset of similar tasks and is admitted into $\mathcal{T}$ only if its held-out success rate exceeds a threshold $\gamma$. Once admitted, the tool is immediately registered alongside the seed tools and can be invoked in future rollouts.

\subsection{Episodic Multimodal Memory}
\label{sec:memory}

\input{tables/cik_main}

Episodic memory is common in agentic systems, but its design is nontrivial in contextualized time-series settings, since relevance depends on both task framing and temporal structure. A text-based or jointly serialized retrieval space can (1) retrieve misleading precedents, since token similarity does not necessarily preserve numerical proximity (Proposition~\ref{prop:token_numeric}); (2) risk modality imbalance for long series; (3) exhibit scale instability, as normalization choices can change which temporal patterns appear similar.

\paragraph{Memory records.}
Each memory record $m\in\mathcal{M}$ stores a past task, its retrieval keys, and its successful analytical trace:
\begin{equation}
    m = (\tau, \phi, \psi, \zeta, \rho),
\end{equation}
where $\tau$ is the original task, $\zeta$ is the rollout trajectory produced under the harness, and $\rho$ is a short transferable rationale summarizing the contextual cue and analytical rule behind the solution. For example, in the energy domain, $\rho$ may note that a snowstorm scenario implies a sharp electricity demand increase due to air-conditioning load. The remaining fields $\phi$ and $\psi$ serve as retrieval keys over the signal and text modalities, respectively.

\paragraph{Multimodal retrieval keys.}
For holistic memory retrieval, we design retrieval keys for both modalities. The context key $\phi=\Phi(c_\tau)$ embeds a textual descriptor $c_\tau$ constructed from the task context. The time series key $\psi=\Psi(\mathbf{X}_{1:T})$ is a time-series fingerprint that encodes structural, statistical, spectral, temporal-dynamic, and multivariate properties. 
Due to page limitations, we provide details of fingerprint computation in Appendix \ref{ap:fingerprint}.

\paragraph{Similarity computation and retrieval.}
Given a query task $\tau_q$ with keys $(\psi_q,\phi_q)$, the text-side similarity is defined by cosine similarity,
\begin{equation}
    s_{\mathrm{text}}(q,m)=\cos(\phi_q,\phi_m),
\end{equation}
while the time series similarity is defined by the negative distance between normalized fingerprints,
\begin{equation}
    s_{\mathrm{ts}}(q,m)=-\|\tilde{\psi}_q-\tilde{\psi}_m\|_2,
\end{equation}
where $\tilde{\psi}$ denotes the normalized fingerprint. These two scores can be merged in different ways, such as weighted aggregation, reciprocal-rank fusion, or staged retrieval. In \method, we adopt a simple yet effective two-stage top-k retrieval strategy with details provided in Appendix \ref{ap:multimodal_memory}.

%% file: tables/cik_main.tex
\begin{table*}[t]
\caption{Overall results on the CiK benchmark. We report token usage when applicable. The best and second-best results in each metric column are highlighted in \textbf{bold} and \underline{underlined}, respectively. \method achieves the best average RCRPS and sMAPE, obtains the best RCRPS in 4 out of 5 context types, and is nearly tied regarding the remaining context type. Meanwhile, \method achieves substantially better performance with nearly half the token budget compared with the second-best multi-agent solution.}
\label{table:main-results}
\centering
\resizebox{\textwidth}{!}{%
\begin{tabular}{lcccccccc}
\toprule
 \multirow{2}{*}{\makecell{Method}} 
 & \multirow{2}{*}{\makecell{Average \\ RCRPS ($\downarrow$)}} 
 & \multirow{2}{*}{\makecell{Average \\ sMAPE ($\downarrow$)}} 
 & \multirow{2}{*}{\makecell{Average \\ Token Usage ($\downarrow$)}}
 & \multicolumn{5}{c}{RCRPS by Context Type ($\downarrow$)} \\
\cmidrule(lr){5-9}
 & & & & Intemporal & Historical & Future & Covariate & Causal \\
\midrule
\multicolumn{9}{l}{\textit{Traditional Time Series Models}} \\
~~~ARIMA \citeyearpar{broomhead1989time} & 0.2772 & 91.2514 & -- & 0.2987 & 0.1032 & 0.1577 & 0.2073 & 0.2822 \\
~~~ETS \citeyearpar{jain2017study} & 0.3282 & 84.5230 & -- & 0.3654 & 0.1738 & 0.1966 & 0.2402 & 0.3290 \\
~~~DLinear \citeyearpar{DBLP:conf/aaai/ZengCZ023} & 2.3448 & 184.1755 & -- & 3.0988 & 4.9004 & 2.6461 & 1.9710 & 3.0394 \\
~~~PatchTST \citeyearpar{DBLP:conf/iclr/NieNSK23} & 2.3374 & 166.8354 & -- & 3.0980 & 4.9007 & 2.6294 & 1.9614 & 3.0523 \\
\midrule
\multicolumn{9}{l}{\textit{Time Series Foundation Models}} \\
~~~Chronos-2 \citeyearpar{DBLP:journals/corr/abs-2510-15821} & 0.2344 & 90.8453 & -- & 0.2277 & 0.1852 & 0.2262 & 0.2406 & 0.3426 \\
~~~Lag-Llama \citeyearpar{rasul2023lag} & 0.2628 & 95.1976 & -- & 0.2475 & 0.1233 & 0.1987 & 0.2461 & 0.3299 \\
~~~Moirai-Large \citeyear{DBLP:conf/icml/WooLKXSS24} & 0.4545 & 107.8152 & -- & 0.3865 & 0.0929 & 0.3936 & 0.3792 & 0.2690 \\
\midrule
\multicolumn{9}{l}{\textit{LLM/Agent on Time Series}} \\
~~~UniTime \citeyearpar{DBLP:conf/www/LiuHLDLHZ24} & 0.2822 & 93.3854 & -- & 0.2910 & \underline{0.0889} & 0.1665 & 0.2964 & 0.2677 \\
~~~Time-LLM \citeyearpar{DBLP:conf/iclr/0005WMCZSCLLPW24} & 0.3990 & 102.6825 & -- & 0.3629 & 0.1144 & 0.3172 & 0.3589 & 0.3146 \\
~~~TS-Agent \citeyearpar{DBLP:journals/corr/abs-2510-07432} & 0.1421 & 61.2980 & 47455.5880 & 0.1434 & 0.0904 & 0.1210 & 0.1516 & \textbf{0.1596} \\
~~~TSci \citeyearpar{DBLP:journals/corr/abs-2510-01538} & 0.1448 & 69.9068 & 47905.1056 & 0.1540 & 0.1001 & 0.1181 & 0.1694 & 0.1971 \\
\midrule
\multicolumn{9}{l}{\textit{General Agentic Pipelines}} \\
~~~Direct Prompt & 0.1703 & 60.4407 & 24861.3451 & 0.1938 & 0.1244 & 0.1239 & 0.1880 & 0.2793 \\
~~~CoT Prompt \citeyearpar{DBLP:conf/nips/Wei0SBIXCLZ22} & 0.1726 & 66.3724 & 27120.8662 & 0.1771 & 0.1230 & 0.1520 & 0.1932 & 0.2392 \\
~~~ReAct \citeyearpar{DBLP:conf/iclr/YaoZYDSN023} & 0.1514 & 66.2284 & 25502.9507 & 0.1678 & 0.1452 & 0.1317 & 0.1681 & 0.2249 \\
~~~Self-Reflection \citeyearpar{DBLP:journals/corr/abs-2405-06682} & 0.1608 & 64.7013 & 28942.8662 & 0.1869 & 0.1240 & 0.1131 & 0.1851 & 0.2756 \\
~~~Multi-Agent Reflection \citeyearpar{DBLP:conf/icml/YuanX25} & \underline{0.1294} & \underline{55.5704} & 63033.0070 & \underline{0.1346} & 0.1086 & \underline{0.1108} & \underline{0.1437} & 0.1697 \\
\midrule
\method (Ours) & \textbf{0.1145} & \textbf{52.4542} & 35553.2113 & \textbf{0.1303} & \textbf{0.0886} & \textbf{0.0703} & \textbf{0.1239} & \underline{0.1597} \\
\bottomrule
\end{tabular}
}
\vspace*{-0.2cm}
\end{table*}

%% file: 0_sections/5_experiment.tex
\section{Experiments}

\input{tables/tsr_main.tex}

We comprehensively evaluate \method on three recent benchmarks spanning diverse application domains, demonstrating its effectiveness across a wide range of open-ended contextualized time-series scenarios.

\noindent\textbf{Benchmarks and Evaluation Focus.}
We evaluate \method on three complementary benchmarks. Context-is-Key (CiK) \citep{DBLP:conf/icml/WilliamsAMZSRRL25} examines context utilization in context-aware time-series reasoning; TSRBench \citep{DBLP:journals/corr/abs-2601-18744} evaluates open-ended generalist capability through multi-task multimodal scenarios; and TSAIA \citep{DBLP:journals/corr/abs-2509-01822} focuses on practical domain-specific financial analysis. We provide details in Appendix \ref{ap:dataset_details}

\noindent\textbf{Baselines.}
We compare \method with a broad set of baselines spanning five families. 
(a) \textit{Traditional time-series models}, including statistical forecasting methods ARIMA and ETS \citep{shchur2023autogluon}, as well as neural forecasting models DLinear \citep{DBLP:conf/aaai/ZengCZ023} and PatchTST \citep{DBLP:conf/iclr/NieNSK23}. 
(b) \textit{Time-series foundation models}, including Chronos-family models \citep{DBLP:journals/tmlr/AnsariSTZMSSRPK24, DBLP:journals/corr/abs-2510-15821}, Lag-Llama \citep{rasul2023lag}, and Moirai-family models \citep{DBLP:conf/icml/WooLKXSS24}. 
(c) \textit{LLMs and agents designed for time series}, including UniTime \citep{DBLP:conf/www/LiuHLDLHZ24}, Time-LLM \citep{DBLP:conf/iclr/0005WMCZSCLLPW24}, TS-Agent \citep{DBLP:journals/corr/abs-2510-07432}, and TSci \citep{DBLP:journals/corr/abs-2510-01538}.
(d) \textit{Open-source LLMs}, such as LLaMA3-70B \citep{DBLP:journals/corr/abs-2407-21783}.
(e) \textit{General agentic pipelines}, including direct prompting, chain-of-thought prompting \citep{DBLP:conf/nips/Wei0SBIXCLZ22}, ReAct \citep{DBLP:conf/iclr/YaoZYDSN023}, self-reflection \citep{DBLP:journals/corr/abs-2405-06682}, and multi-agent reflection \citep{DBLP:conf/icml/YuanX25}. 
Additional details are provided in Appendix~\ref{ap:baselines}.

\begin{figure}[t]
\centering
\includegraphics[width=\linewidth]{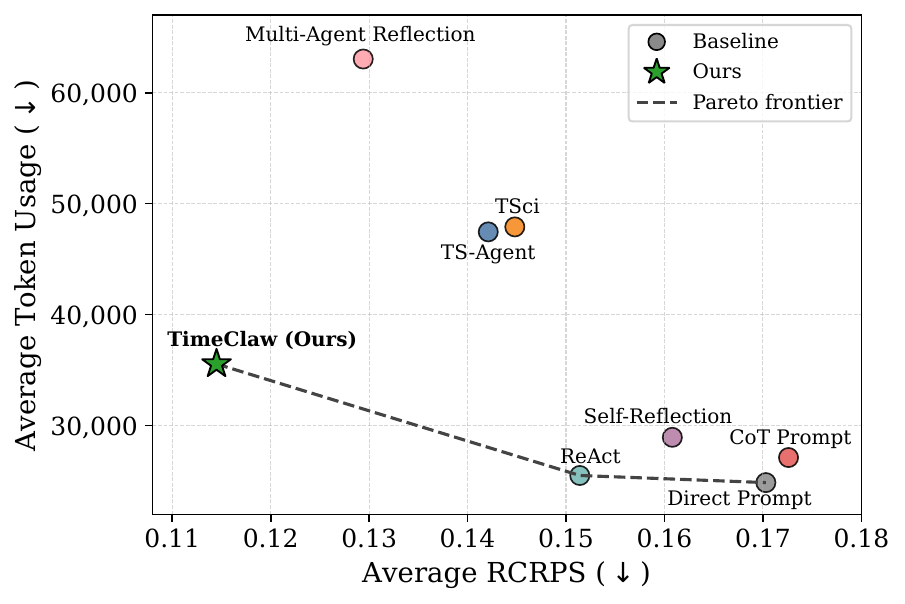}
\caption{Performance-efficiency trade-off on CiK. Among methods with reported token usage, \method achieves the best RCRPS with substantially lower token consumption than the strongest multi-agent baseline.}
\label{fig: teasing}
\end{figure}

\subsection{Context Matters for Time Series, and \method Uses It More Effectively}

\noindent\textbf{Metrics.}
RCRPS is the primary metric of CiK. It emphasizes context-relevant regions and penalizes violations of task-specific constraints. We additionally report sMAPE as a widely used metric in time series. We also report token usage when applicable. Detailed metric definitions are provided in Appendix~\ref{ap:metrics}.

\noindent\textbf{Main Results.}
Table~\ref{table:main-results} summarizes the results on the CiK benchmark. Overall, methods that can access and reason over contextual information generally outperform context-agnostic models. DLinear and PatchTST show large errors because they require the training length to exceed the prediction length, which can fail in open-ended tasks.
\method achieves the best average RCRPS and sMAPE, improving average RCRPS by 11.5\% relative to the strongest baseline. \method obtains the best RCRPS in 4 out of 5 context categories and is nearly tied with the best baseline in the remaining one. Notably, \method achieves these gains efficiently. Compared with Multi-Agent Reflection, \method uses 43.6\% fewer tokens while achieving better average RCRPS and sMAPE. We provide additional domain-wise results in Table~\ref{tab:cik_resuls_domain}.

\subsection{\method Improves Open-Ended Temporal Reasoning}

We next evaluate \method on TSRBench~\citep{DBLP:journals/corr/abs-2601-18744}, a generalist multi-task multimodal benchmark spanning perception, reasoning, prediction, and decision making. Numerical-only time-series models cannot handle end-to-end scenarios.

Table~\ref{tab:tsrbench_nano_main} compares \method with representative agentic frameworks, all using GPT-5-nano as the base model. \method achieves the best average accuracy with a 15.8\% relative improvement while using the fewest tokens.
Across task categories, \method achieves the best performance except for ranking second on perception. This suggests that although serialization can encode sufficient information under a large context window, agents may still struggle to decode it into actionable insights. With agent harness, \method yields strong gains on reasoning and prediction, where time-series-specific analysis is more critical.

\input{tables/tsr_main_open.tex}

We further compare \method with various open-source models, including general LLMs, multimodal vision-language models (VLMs), and time-series LLMs (TSLLMs). Time series are represented as textual numerical sequences for LLMs, plots for VLMs, and projector-based embeddings for TSLLMs.
As shown in Table~\ref{tab:tsrbench_nano_open}, \method achieves the best performance across all task categories, outperforming larger open-source models with up to 235B parameters with GPT-5-nano.

\subsection{\method Applicable in Practice}

\begin{figure}[t]
\centering
\includegraphics[width=\linewidth]{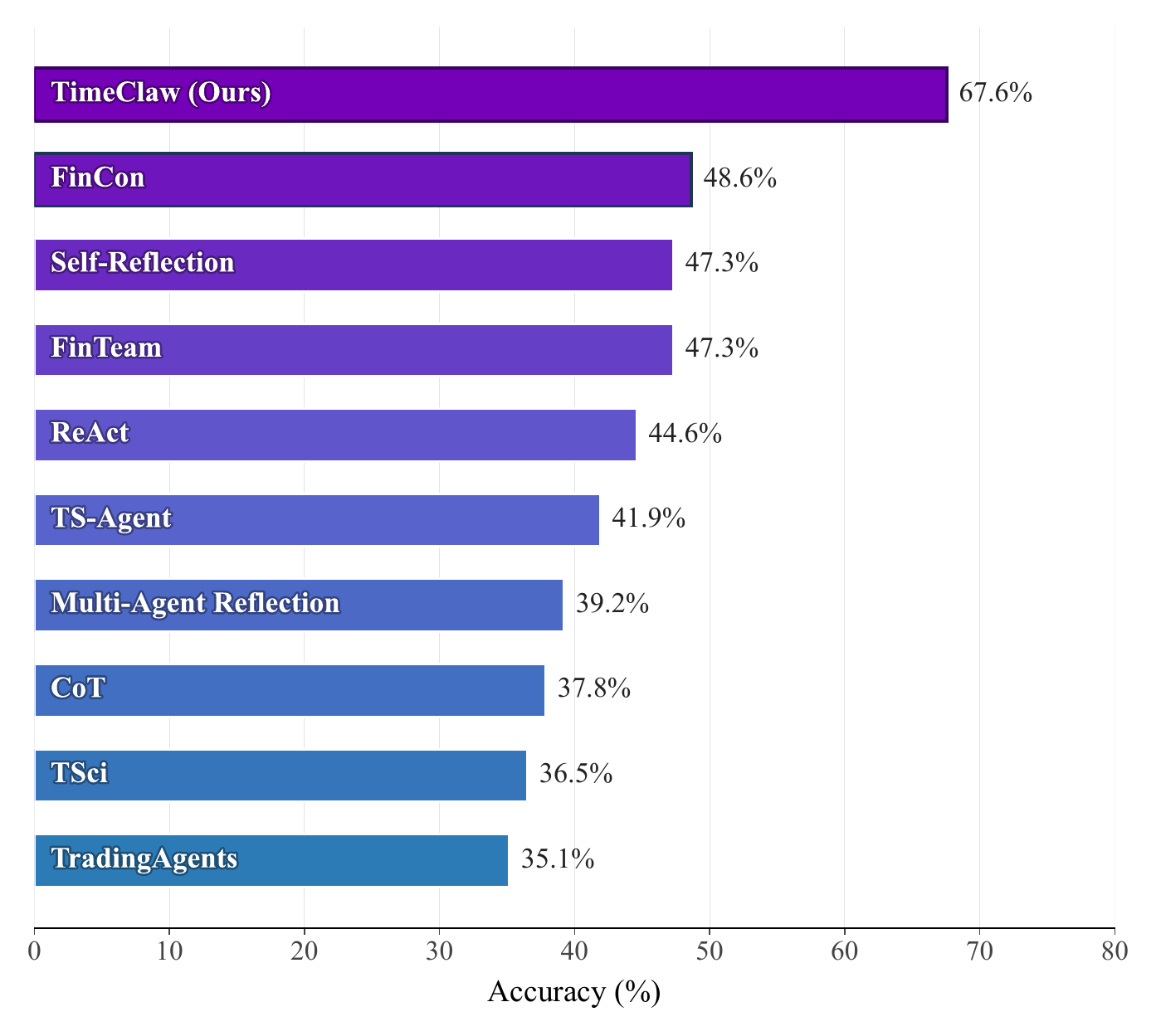}
\caption{
Performance on TSAIA. 
\method substantially outperforms both general agentic baselines and finance-specific agents, demonstrating its practical applicability in financial time-series analysis.
}
\label{fig:tsaia_mc_accuracy}
\end{figure}

We finally evaluate whether \method remains effective in a practical domain-specific setting. 
We use TSAIA~\citep{DBLP:journals/corr/abs-2509-01822}, a financial time-series analysis benchmark.
In addition to general agentic baselines, we include three finance-specific agent systems: FinCon~\citep{DBLP:conf/nips/YuYLDJCCSCLXZSX24}, FinTeam~\citep{DBLP:conf/nlpcc/WuWLYLZLCZW25}, and TradingAgents~\citep{DBLP:journals/corr/abs-2412-20138}. 

\noindent\textbf{Main Results.}
Figure~\ref{fig:tsaia_mc_accuracy} reports the multiple-choice accuracy on TSAIA. Notably, through the capability-evolution loop in Section~\ref{sec:evolve}, \method autonomously evolves three reusable routines from bank-building trajectories: \texttt{portfolio\_var}, \texttt{portfolio\_sharpe}, and \texttt{capm\_regression}. 
Equipped with these evolved routines, \method substantially outperforms all baselines by 38.9\% relative improvement. demonstrating that \method can acquire practical domain capabilities from experience.

\subsection{Ablation Study}

We conduct ablation studies to examine the effects of memory retrieval, backbone models, and framework components. 
As shown in Figure~\ref{fig:ablation_study}(a), enabling episodic memory retrieval improves both sMAPE and RCRPS, and larger retrieval sizes further improve performance. This shows that relevant past reasoning trajectories provide useful guidance for new contextualized time-series tasks.
Figure~\ref{fig:ablation_study}(b) shows that \method benefits from stronger LLM backbones, suggesting that time-series-native support complements general reasoning ability. 
Finally, Figure~\ref{fig:ablation_study}(c) shows that removing the tool harness, capability evolution, or memory all degrades performance, confirming that each part contributes to the final result. Due to page limitations, we provide further ablation studies in Appendix \ref{ap:full_experiments}, including memory bank size ablation and model family compatibility ablation.

\subsection{Case Study}

We provide detailed case studies in Appendix~\ref{ap:case_study} to illustrate how \method works on concrete tasks. 
These examples show how \method leverages textual context under identical numerical histories, uses retrieved memory to guide causal reasoning, and evolves reusable tools from prior trajectories. 
They qualitatively demonstrate that the gains of \method come from interpretable uses of context, memory, and experience-driven capability evolution.

\begin{figure*}[t]
  \begin{subfigure}[t]{0.32\linewidth}
    \centering
    \includegraphics[width=\linewidth]{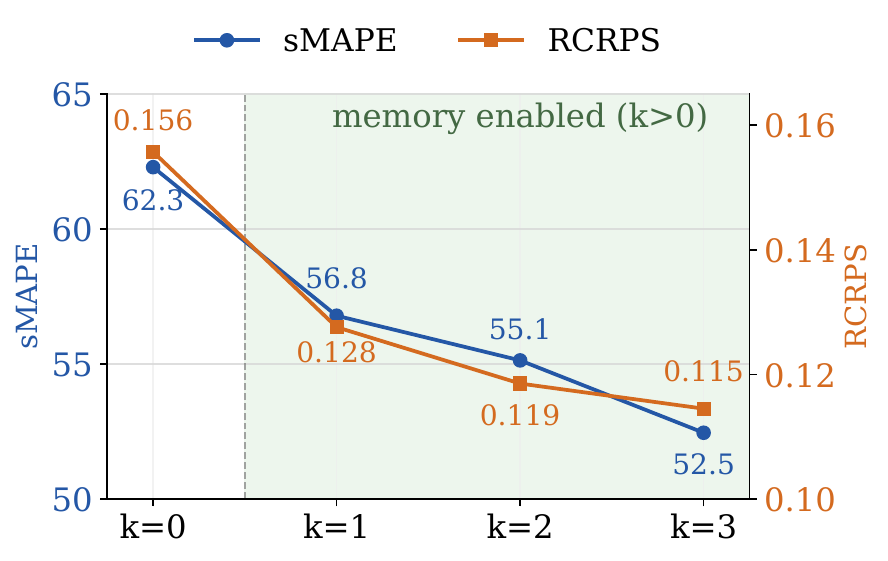}
    \caption{Retrieval Size Ablation.}
  \end{subfigure}\hfill
  \begin{subfigure}[t]{0.32\linewidth}
    \centering
    \includegraphics[width=\linewidth]{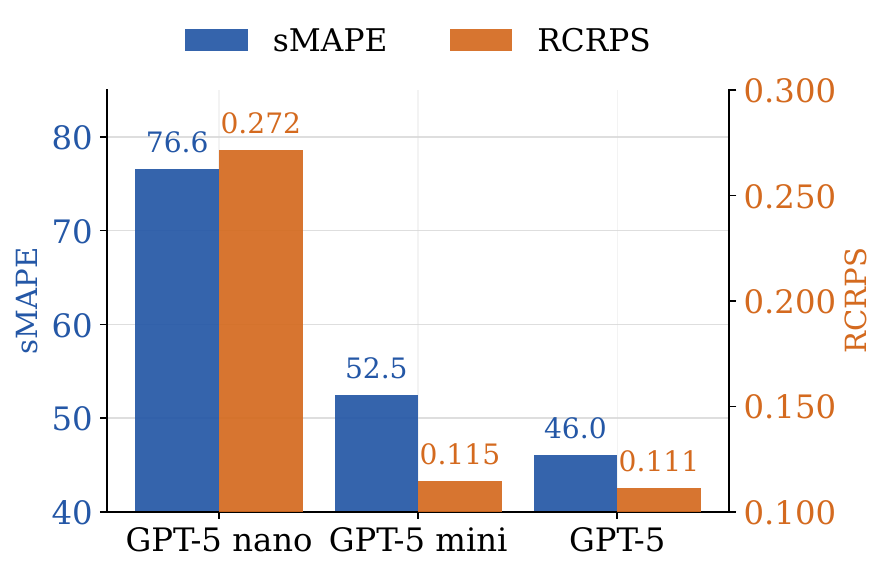}
    \caption{Backbone Model Ablation.}
  \end{subfigure}\hfill
  \begin{subfigure}[t]{0.32\linewidth}
    \centering
    \includegraphics[width=\linewidth]{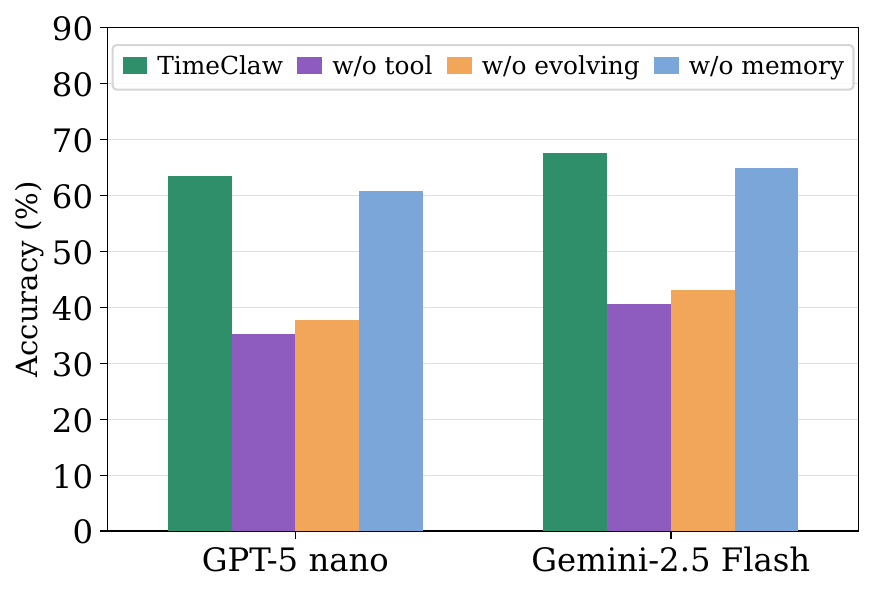}
    \caption{Component Ablation.}
  \end{subfigure}\\[4pt]

\caption{
Ablation studies of \method. 
\textbf{(a)} Retrieval size ablation shows the benefit of enabling memory retrieval. 
\textbf{(b)} Backbone model ablation shows that \method consistently benefits from stronger LLM backbones. 
\textbf{(c)} Component ablation verifies the contributions of the tool harness, capability evolution, and memory. 
}
\label{fig:ablation_study}
\end{figure*}

%% file: tables/tsr_main.tex
\renewcommand{\arraystretch}{1.2}
\begin{table*}[t]
\caption{Performance on TSRBench across open-ended temporal reasoning tasks. The best and second-best results in each metric column are highlighted in \textbf{bold} and \underline{underlined}, respectively. All methods use GPT-5-nano as the base model. \method achieves the best average accuracy while using the fewest tokens.}
\label{tab:tsrbench_nano_main}
\centering
\resizebox{\textwidth}{!}{%
\begin{tabular}{lcccccc}
\toprule
Method & Perception & Reasoning & Prediction & Decision Making & Avg. Accuracy (\%) & Avg. Token Consumption \\ 
\midrule
CoT \cite{wei2022chain} & 48.57 & 36.84 & 46.30 & 28.57 & 40.05 & 22,274 \\
ReAct \cite{DBLP:conf/iclr/YaoZYDSN023} & 55.71 & 36.26 & 43.52 & \underline{31.75} & 40.78 & 25,165 \\
Self-Reflection \cite{DBLP:journals/corr/abs-2405-06682} & 50.00 & \underline{40.93} & 38.89 & 30.16 & 40.29 & 25,799 \\
Multi-Agent Reflection \cite{DBLP:conf/icml/YuanX25} & 62.85 & 33.92 & \underline{54.63} & 25.40 & \underline{42.96} & 45,279 \\
TSci \cite{DBLP:journals/corr/abs-2510-01538} & \textbf{65.71} & 33.33 & 43.52 & 26.98 & 40.53 & 51,273 \\
\midrule
\method (Ours)    & \underline{64.29} & \textbf{43.86} & \textbf{59.26} & \textbf{33.33} & \textbf{49.76} & \textbf{15,979} \\
\bottomrule
\end{tabular}%
}
\end{table*}
\renewcommand{\arraystretch}{1.0}

%% file: tables/tsr_main_open.tex
\begin{table}[t]
\caption{\method compared with open-source models on TSRBench. Baseline results are taken from the original TSRBench paper~\citep{DBLP:journals/corr/abs-2601-18744}. By harnessing GPT-5-nano, \method achieves the best performance, outperforming large open-source models with up to 235B parameters. Full results in Table \ref{tab:tsrbench_ap_open}.}
\label{tab:tsrbench_nano_open}
\centering
\resizebox{1.0\columnwidth}{!}{%
\begin{tabular}{lccccc}
\toprule
Method & Percept & Reason & Predict & Decide & Overall \\ 
\midrule
\textit{General LLMs} & & & & & \\
Qwen2.5-3B                   & 45.4 & 27.3 & 40.4 & 23.3 & 33.2 \\
Qwen2.5-7B                   & 49.6 & 28.3 & 34.8 & 28.5 & 33.7 \\
Qwen2.5-72B                  & 55.6 & 39.7 & 43.9 & 32.2 & 42.4 \\
Qwen3-1.7B                   & 47.0 & 27.7 & 48.7 & 28.2 & 36.6 \\
Qwen3-8B                     & 52.6 & 29.1 & 47.4 & 32.0 & 38.3 \\
Qwen3-235B-A22B              & 63.8 & 40.9 & 35.6 & 32.5 & 42.1 \\
Gemma3-12B-it                & 54.3 & 30.6 & 37.7 & 32.9 & 36.8 \\
Gemma3-27B-it                & 57.0 & 33.7 & 38.3 & 31.8 & 38.6 \\
\midrule
\textit{Multimodal LLMs} &  &  &  &  & \\
Phi4-Multimodal-8B           & 49.0 & 26.0 & 30.0 & 32.1 & 31.9 \\
InternVL3.5-1B               & 45.9 & 27.8 & 38.6 & 28.2 & 33.8 \\
InternVL3.5-8B               & 59.3 & 35.6 & 39.4 & 28.2 & 39.5 \\
InternVL3.5-38B              & 60.3 & 38.1 & 32.3 & 32.3 & 39.5 \\
MiniCPM-V-4.5-8B             & 61.3 & 26.5 & 39.0 & 27.9 & 35.9 \\
MiMo-VL-7B-RL                & 59.1 & 33.1 & 33.2 & 30.9 & 37.2 \\
\midrule
\textit{Time Series LLMs} & & & & & \\
OpenTSLM-tsqa-sp-3B          & 39.7 & 26.8 & 33.5 & 28.5 & 31.0 \\
ChatTS-14B                   & 50.7 & 29.3 & 30.2 & 28.5 & 33.0 \\
TS-Reasoner-7B               & 52.1 & 31.3 & 39.5 & 27.6 & 36.4 \\
TimeOmni-1-7B                & 54.7 & 31.0 & 36.5 & 32.2 & 36.6 \\
\midrule
\method (Ours)               & \textbf{64.3} & \textbf{43.9} & \textbf{59.3} & \textbf{33.3} & \textbf{49.8} \\
\bottomrule
\end{tabular}%
}
\end{table}

%% file: 0_sections/6_related_work.tex
\section{Related Work}

In this section, we review the key related works on the topics that are closely related to this work. We put more related works and discussions in the Appendix \ref{ap:related_work} to keep the main text concise.

\paragraph{Time-Series-only Analysis and Reasoning.}
Traditional time series analysis has predominantly addressed core tasks like forecasting, anomaly detection, and classification using specialized deep learning architectures \citep{zhou2021informer, DBLP:conf/nips/WuXWL21, DBLP:conf/iclr/NieNSK23}.
Recently, inspired by the success of LLMs, time series foundation models tokenize time series patches and are trained in a similar way to LLMs \citep{jin2024timellm, ansari2024chronos, DBLP:conf/icml/WooLKXSS24}. 
Beyond numerical sequences, some recent studies transform time series as images and leverage the reasoning abilities of Vision-Language Models (VLMs) to model time series \cite{zhong2025time,he2026harnessing}. \method focuses on a more general setting of contextualized time-series reasoning, where temporal signals are analyzed with essential contexts.

\paragraph{Contextualized and Multimodal Time Series.}
Real-world time series are heavily influenced by external events, prompting a shift to integrate external multimodal knowledge \citep{liu2025towards, DBLP:journals/corr/abs-2502-01477, sun2024test, li2024urbangpt}. 
Recent frameworks demonstrate significant improvement in forecasting accuracy with contexts \citep{DBLP:conf/ml4h/KingYM23,DBLP:conf/www/LiuHLDLHZ24}.
Building on these contextual advances, recent work has expanded from traditional numerical estimation to Time Series Question Answering (TSQA), enabling natural-language-driven description over complex temporal behaviors of the time series\citep{chen2025mtbench, wang2025chattime,jing2026tsaqa,kong2025time}. 
Yet, most existing work on contextualized time series still centers on predefined tasks such as forecasting. In contrast, \method targets generalized end-to-end workflows that require agents to jointly interpret time series, analyze with context, and produce actionable solutions accordingly.

\paragraph{Agentic Systems for Time-Series Workflows.}

As time series analysis grows in complexity, recent advancements have started applying agentic AI for time series \citep{DBLP:journals/corr/abs-2601-13653, DBLP:journals/corr/abs-2510-07432}. 
While the broader AI literature explores agents across a wide array of domains, the application of LLM agents within time series research is currently overwhelmingly concentrated on forecasting tasks and time series property analysis tasks (e.g., periodicity, trend, stationary) \citep{DBLP:journals/corr/abs-2509-11575, DBLP:journals/corr/abs-2510-01538, lee2025timecap, huang2026many}. 
While these agentic workflows show potential in forecasting accuracy and temporal property analysis, their objectives remain largely prediction-centric. \method takes a step toward a more reliable and general class of time-series agents that support context-rich and end-to-end workflow.

%% file: 0_sections/7_conclusion.tex
\section{Conclusion}
\label{sec:conclusion}

In this work, we introduce \method, a time-series-native agentic harness for reasoning over contextualized time series. \method equips generalist LLM agents with executable temporal tools, experience-driven capability evolution, and episodic multimodal memory. Across multiple time-series benchmarks, \method consistently improves over both traditional and agentic LLM baselines. These empirical results demonstrate that harnessing generalist agents with native temporal interfaces is a promising direction for building reliable and adaptable time-series intelligence.

\clearpage

%% file: 1_appendix/appendix.tex
\appendix

\onecolumn
\begin{center}
{\textbf{\LARGE Appendix}}
\end{center}
\vspace{3mm}

\noindent\textbf{Roadmap.}
In this appendix, we provide a detailed overview of our methodology and experimental setup.
Appendix~\ref{ap:analysis} motivates \method by examining why directly trusting LLMs for contextualized time-series reasoning is unreliable.
Appendix~\ref{ap:fingerprint} details the fingerprint computation underlying the multimodal memory, including the context embedding and the series fingerprint construction.
Appendix~\ref{ap:multimodal_memory} describes the full memory construction and retrieval procedure, covering bank-level normalization and the two-stage retrieval mechanism.
Appendix~\ref{ap:related_work} provides extended related work on time-series foundation models, agentic reasoning and harness design, and LLM agents for data science.
Appendix~\ref{ap:exp_details} reports experiment setup details.
Appendix~\ref{ap:full_experiments} presents the full experimental results, including component ablations across multiple backbone families, context-type and domain-wise breakdowns on CiK, memory-bank size sweeps on TSRBench, and fine-grained comparisons against open-source LLMs, VLMs, and TSLLMs.
Appendix~\ref{ap:baselines} describes the baselines and their references in detail.
Appendix~\ref{ap:case_study} provides qualitative case studies that illustrate the end-to-end behavior of \method on representative contextualized time-series tasks.
Finally, Appendix~\ref{app:prompts} provides our prompt templates.

The table of contents is provided below for quick navigation.

\renewcommand\contentsname{\Large Table of Contents}
\addtocontents{toc}{\protect\setcounter{tocdepth}{2}}


{
\setstretch{1.15}
\tableofcontents
}

\clearpage

\twocolumn

\input{1_appendix/ap_analysis}

\section{Extended Related Work}
\label{ap:related_work}

\textbf{Time-Series Foundation Models.} 
Recent time-series foundation models seek to reduce reliance on dataset-specific training through large-scale pretraining and transfer across heterogeneous temporal domains. TimeGPT demonstrates zero-shot forecasting across unseen series~\citep{garza2023timegpt}, while Lag-Llama targets probabilistic zero-shot and few-shot forecasting~\citep{rasul2023lagllama}. Other recent work scales this paradigm along complementary axes: TimesFM uses a decoder-only patched forecasting architecture~\citep{das2023decoder}, Chronos formulates forecasting as language modeling over quantized numerical tokens~\citep{ansari2024chronos}, Moirai studies universal forecasting across variables and frequencies~\citep{woo2024unified}, MOMENT broadens pretraining toward general-purpose time-series representation learning~\citep{goswami2024moment}, and Timer studies generative pretraining for forecasting~\citep{liu2024timer}. Most of the models uses transformer backbones yet other architectures are worth exploring \citep{ai2025resmoe, ai2025nirvana, zou2026transformer, tieu2025learnable}. Together, these models offer reusable pretrained components for forecasting and related time-series tasks \citep{VQKDD,guo2024}. They remain primarily model-centric, however: the interface is typically a predefined task such as forecasting, imputation, classification, or anomaly detection \citep{zhou2023one,gcformer,svq}. The interface is typically tied to predefined task formats. In contrast, \method targets open-ended contextualized time-series tasks through an agentic pipeline, where the system can interpret task goals, reason over dataset context and temporal structure, and coordinate executable analyses beyond a fixed task formulation.

\input{tables/cik_dataset}

\textbf{Agentic Reasoning and Harness.}
Recent advances in agentic systems extend LLMs from passive text generators to interactive problem solvers that operate through structured reasoning and execution harnesses~\cite{li2025survey,ning2026mc}. An agent harness defines how an LLM policy interfaces with external computation and environments, including available tools, action schemas, state representations, observation formats, and execution constraints~\cite{zhou2026externalization}. By externalizing specialized computation into this interface, harnesses reduce the burden on the LLM itself while making intermediate reasoning steps more controllable. Such harnesses are helpful to tool-augmented reasoning~\cite{ma2024sciagent} and ReAct-style agents~\cite{DBLP:conf/iclr/YaoZYDSN023}, where models decompose complex tasks into executable actions and ground intermediate reasoning in returned observations. Recent work further augments agent harnesses with memory modules for retrieving prior trajectories~\cite{hu2025memory,huang2026rethinking} and evolving mechanisms that distill reusable skills or tools from past experience~\cite{wei2025evo,fang2025comprehensive}. Based on this harness paradigm, \method specializes it for contextualized time-series reasoning through a time-series-native tool harness, episodic trajectory memory, and experience-driven tool evolution.

\textbf{LLM Agents for Data Science.}
Language models have demonstrated strong capabilities in understanding structured data \citep{li2025can, zou2025rag, ning2025graph4mm}.
LLM agents have also been developed for data analysis and machine-learning engineering, where agents write code, inspect intermediate outputs, and iterate over experiments \citep{kong2026ai, li2026heterogeneous, zhao2026papermindbenchmarkingagenticreasoning}. Data-Copilot studies code-centric querying, processing, and visualization over large data sources \citep{zhang2023datacopilot}. MLAgentBench evaluates agents that modify files, run experiments, and improve models across machine-learning tasks \citep{huang2023mlagentbench}, while MLE-bench extends this setting to Kaggle-style machine-learning engineering competitions \citep{chan2025mlebench}. Workflow-level agents such as DS-Agent and Data Interpreter further combine planning, tool use, execution feedback, and code refinement for end-to-end data-science workflows \citep{guo2024dsagent,hong-etal-2025-data}. These systems are related to \method in treating analysis as an interactive executable process rather than a fixed prediction interface. Their scope, however, is general data-science and ML-engineering automation, and time-series-specific evaluation are not the central design constraint. \method focuses on this contextualized time-series analysis setting.

\section{Experiment Setup Details}
\label{ap:exp_details}

\subsection{Dataset Statistics and Details}
\label{ap:dataset_details}

\input{tables/tsrbench_dataset}

\subsubsection{Context-is-Key (CiK) Benchmark}

We use the Context-is-Key (CiK) benchmark~\citep{DBLP:conf/icml/WilliamsAMZSRRL25} as a testbed for evaluating whether models can incorporate essential natural-language context when reasoning over contextualized time series. CiK pairs numerical temporal observations with textual information that associates with the underlying real-world process. This design aligns with our setting where time series are not treated as isolated numerical sequences, but as partial observations embedded in broader contextual evidence. The provided context may include background knowledge about the process, future events, operational constraints, historical summaries, covariates, or causal relationships.

CiK contains 71 manually designed context-aware temporal tasks. Most tasks are based on real-world time series from various application domains, including climatology, economics, energy, mechanics, public safety, transportation, and retail. The benchmark draws from 2,644 publicly available time series, with sampling frequencies ranging from 10-minute intervals to monthly observations. A small fraction of the tasks use synthetic dynamical systems. Each task template can be instantiated with different time series, time windows, and language formulations, making CiK a useful benchmark for testing whether a model can ground temporal reasoning in diverse contextual information.

Table~\ref{tab:cik_domain_stats} summarizes the task composition by domain.
CiK also annotates each task with one or more context types, including
intemporal information, historical information, covariate information, future
information, and causal information. Table~\ref{tab:cik_context_types}
summarizes these context types, their task counts, and representative examples
from the benchmark. These context-type tags are not mutually exclusive: a
single task may require multiple types of contextual information. We refer
readers to the original CiK paper for full data-source descriptions,
task-construction details, examples, and additional visualizations~\citep{DBLP:conf/icml/WilliamsAMZSRRL25}.

\begin{table*}[t]
\centering
\caption{Task composition of the TSAIA Finance benchmark used in our
experiments. The tasks cover portfolio risk, risk-adjusted return, and
market-relative stock behavior.}
\small
\resizebox{0.88\textwidth}{!}{
\begin{tabular}{l c p{0.58\textwidth}}
\toprule
Financial Task & \# Instances & Task Description \\
\midrule
VaR Confidence Level & 50 &
Compare candidate portfolios under a value-at-risk criterion and identify the
portfolio with the lowest downside risk over the target horizon. \\
\midrule
Sharpe Ratio & 50 &
Compare candidate portfolios using historical stock prices and risk-free-rate
information to determine the strongest risk-adjusted return. \\
\midrule
Market Alpha & 25 &
Assess whether a stock outperforms or underperforms a market index after
accounting for risk-adjusted market exposure. \\
\midrule
Market Beta & 25 &
Assess whether a stock is more or less sensitive to market movement by
reasoning about its market-relative volatility. \\
\midrule
Total & 150 & Applied financial temporal reasoning instances. \\
\bottomrule
\end{tabular}}
\label{tab:tsaia_finance_stats}
\end{table*}

\subsubsection{TSRBench Benchmark}

We use TSRBench~\citep{DBLP:journals/corr/abs-2601-18744} as a multi-task
benchmark for evaluating contextualized temporal reasoning with generalist
models. TSRBench is designed to assess whether models can interpret time
series together with task instructions, contextual information, and
domain-specific cues across diverse temporal reasoning scenarios. Unlike
benchmarks centered on a single predefined temporal task, TSRBench organizes
problems into multiple capability dimensions, making it well aligned with our
setting of open-ended contextualized temporal workflows.

TSRBench contains 4,125 problem instances and 15,250 time-series channels
spanning 14 domains. The benchmark covers four major reasoning dimensions:
perception, reasoning, prediction, and decision-making. These dimensions are
further divided into 15 fine-grained tasks, as summarized in
Table~\ref{tab:tsrbench_taxonomy}. TSRBench also supports multiple
representations of time series, including textual sequences, visualized plots,
text-visual inputs, and time-series embeddings, enabling evaluation of
different classes of generalist models.

Table~\ref{tab:tsrbench_taxonomy} summarizes the high-level task taxonomy and
domain coverage used in our TSRBench evaluation. The task taxonomy and domain
coverage are not treated as a strict one-to-one hierarchy: a capability
dimension may involve multiple domains, and domain coverage is summarized at a
high level based on instance-level metadata and content inspection. We refer
readers to the original TSRBench paper for detailed task construction
procedures, data collection strategies, quality control, and example
cases~\citep{DBLP:journals/corr/abs-2601-18744}.

Due to budget constraints, we evaluate on a 20\% subset of TSRBench, covering 825 out of 4,125 instances. This sampling is feasible because TSRBench contains repeated task templates; we sample balanced task coverage and verify in a small-scale study that results on the 20\% subset closely match those on the full test set.
The sampling is stratified by subtask, so each subtask contributes nearly one-fifth of its original instances to our evaluation split. Table~\ref{tab:tsrbench_split_stats} summarizes the full benchmark distribution and our evaluation subset.

\subsubsection{TSAIA Finance Benchmark}

We evaluate \method on the TSAIA Finance benchmark~\citep{DBLP:journals/corr/abs-2509-01822}, an applied financial benchmark designed around practical time-series analysis scenarios. The benchmark requires models to interpret historical financial time series together with task-specific information such as portfolio weights, risk-free rates, and market-index references. These tasks reflect realistic analytical questions in financial decision-making, including portfolio risk comparison, risk-adjusted return assessment, and market-relative stock behavior analysis.

This benchmark further evaluates the practical applicability of \method in financial time-series analysis, where temporal observations must be converted into domain-specific decisions. Table~\ref{tab:tsaia_finance_stats} summarizes the task composition used in our evaluation.

\begin{table*}[t]
\centering
\small
\caption{Tools currently exposed in the executable harness ($\mathcal{T}$) of \method, grouped by family. All tools operate on the task-local workspace $\mathcal{W}$ at full numerical precision and return compact structured results to the policy. Tools marked $\dagger$ were admitted into $\mathcal{T}$ via the capability-evolution loop of Section~\ref{sec:evolve}; the remainder are seed tools shipped with the harness.}
\label{tab:tools}
\begin{tabular}{llp{0.62\textwidth}}
\toprule
\textbf{Family} & \textbf{Tool} & \textbf{Description} \\
\midrule
\multirow{7}{*}{Inspection}
& \texttt{list\_channels}      & Return the names of the channels currently loaded in $\mathcal{W}$. \\
& \texttt{series\_overview}    & Per-channel $(n,\min,\max,\mathrm{mean})$ summary together with timestamp count and task metadata. \\
& \texttt{channel\_stats}      & Extended descriptive statistics on one channel: $n$, min, max, mean, std, median, $q_{25}$, $q_{75}$. \\
& \texttt{channel\_values}     & Bounded slice of raw values from one channel ($\le 500$ elements per call); arguments \texttt{start}, \texttt{end}, \texttt{stride}. \\
& \texttt{compute\_acf}        & Sample autocorrelation from lag $0$ to a user-specified \texttt{max\_lag}. \\
& \texttt{detect\_periodicity} & FFT-based dominant period in samples and the fraction of non-DC spectral power it carries. \\
& \texttt{find\_peaks}         & Local maxima detected via \texttt{scipy.signal.find\_peaks}; returns indices, values, and count. \\
\midrule
Forecasting
& \texttt{arima\_forecast}     & Fit ARIMA$(p,d,q)$ on a channel and return a point forecast over the next \texttt{periods} steps, together with the in-sample AIC. \\
\midrule
\multirow{3}{*}{\makecell[l]{Domain-Specific\\(Finance)}}
& \texttt{portfolio\_var}$^\dagger$       & Value-at-Risk of a weighted-price portfolio over a chosen horizon, via historical or parametric estimation. \\
& \texttt{portfolio\_sharpe}$^\dagger$    & Annualized Sharpe ratio of a weighted-price portfolio; the risk-free rate can be a constant or a per-period channel. \\
& \texttt{capm\_regression}$^\dagger$     & OLS regression of asset log returns on market log returns, returning $\alpha$, $\beta$, and $R^2$. \\
\bottomrule
\end{tabular}
\end{table*}

\subsection{Supported Time Series Tools}
\label{app:tools}

\input{tables/hyperparameter}

Table~\ref{tab:tools} catalogs the default tools exposed by the harness $\mathcal{T}$. The eight seed tools (the inspection and forecasting families) are deliberately atomic, so that the policy retains full control over how they are composed during an analysis. The remaining three, marked with $\dagger$, were \emph{not} pre-designed: they were autonomously distilled by \method's capability-evolution loop (Section~\ref{sec:evolve}) from the bank-building trajectories of prior runs.

\noindent\textbf{Evolved Tools: a case study on time series capability-evolution.} 
The three domain-specific tools, \texttt{portfolio\_var}, \texttt{portfolio\_sharpe}, and \texttt{capm\_regression}, emerged after enough successful bank-building trajectories had accumulated on TSAIA, where each task asks the agent to compare option portfolios under a specific quantitative-finance metric. The tool-synthesis agent identified three recurring sub-procedures that the policy was repeatedly assembling from primitive inspection calls: (i) empirical or parametric tail estimation of weighted-portfolio log returns (VaR); (ii) annualized mean-to-volatility ratio of risk-free-adjusted log returns (Sharpe); and (iii) OLS regression of aligned asset and market log returns (CAPM). For each pattern, the synthesis agent proposed a self-contained tool with a typed signature, a docstring, and a body that operates on the workspace through the same interface as the seed tools. Each candidate was verified by held-out execution and admitted into $\mathcal{T}$ only because its success rate exceeded $\gamma = 0.7$. The mechanism is not finance-specific: the finance trio is the first concrete instance of harness self-extension, and analogous tools (e.g., ECG-style anomaly localization, climate-baseline subtraction) are expected to emerge as $\mathcal{M}$ accumulates trajectories from other domains.

\subsection{Hyperparameters}
\label{app:hparams}

Table~\ref{tab:hparams} lists the hyperparameters of \method and the values used in our main experiments. The defaults were fixed before any aggregate result was inspected and are shared across CiK, TSRBench, and TSAIA-MC unless noted.

The subsample seed and the split seed are intentionally independent, so one can vary (e.g., for cross-validation across splits) without disturbing the other. The min-one-train-per-family policy ensures that every task family seen at test time has at least one same-family exemplar available in the bank. The bank-level standard-deviation floor catches fingerprint coordinates that are near-constant in early banks; affected coordinates collapse to zero in the $z$-scored representation rather than triggering a division-by-zero. The reference renderer omits the precedent's final answer for the anchoring reason discussed in Section~\ref{sec:memory}. All experiments use 32 concurrent rollouts; worker count affects wall-clock time but not results.

\subsection{Metrics}
\label{ap:metrics}

\paragraph{RCRPS Metric for Context-is-Key Benchmark.}
We evaluate CiK with the official Region-of-Interest Continuous Ranked
Probability Score (RCRPS) protocol~\citep{DBLP:conf/icml/WilliamsAMZSRRL25}. RCRPS is designed for context-aware temporal prediction, where the provided context may affect only a subset of the target time steps or impose constraints on valid trajectories. Compared with the standard CRPS, RCRPS introduces two context-specific
components: a region of interest (RoI), which assigns higher importance to
time steps where the context is most relevant, and a constraint-violation term,
which penalizes trajectories that violate context-implied constraints.

For a CiK task instance $\tau$, let $\widehat{\mathcal{P}}_{\tau}$ denote the
model's predictive distribution over the target temporal trajectory
$\bm{y}^{\star}_{\tau}$. We write the CiK evaluation function as
\begin{equation}
    \ell_{\mathrm{CiK}}
    (\widehat{\mathcal{P}}_{\tau}, \bm{y}^{\star}_{\tau})
    =
    \mathrm{RCRPS}_{\tau}
    (\widehat{\mathcal{P}}_{\tau}, \bm{y}^{\star}_{\tau}).
\end{equation}
Following the official CiK implementation, RCRPS combines the CRPS over the
RoI, the CRPS over the remaining target steps, and a task-specific
constraint-violation penalty:
\begin{equation}
    \mathrm{RCRPS}_{\tau}
    =
    \alpha_{\tau}\mathcal{E}_{\tau}
    +
    \mathcal{V}_{\tau}(\beta),
\end{equation}
where $\mathcal{E}_{\tau}$ denotes the RoI-weighted predictive error, $\mathcal{V}_{\tau}(\beta)$ denotes the constraint-violation penalty, and $\alpha_{\tau}$ is the task-specific scaling factor used by CiK to make scores comparable across tasks with different numerical scales. When an RoI is specified, CiK assigns equal total weight to RoI and non-RoI target steps; when no meaningful RoI is specified, the score is computed over the full target horizon. We use the official CiK evaluator without modification, including the official constraint-penalty setting $\beta=10$, task-specific RoI sets, scaling factors, constraint functions, and aggregation protocol. Lower RCRPS indicates better performance.

\vspace{3mm}
\noindent\textbf{sMAPE Metric for Context-is-Key Benchmark.}
For deterministic trajectory evaluation on CiK, let $\bm{y}^{\star}_{\tau},\widehat{\bm{y}}_{\tau}\in\mathbb{R}^{H}$ denote the gold and predicted continuations for a task instance $\tau$ over a horizon of length $H$. We use the symmetric mean absolute percentage error (sMAPE) with a small denominator floor $\varepsilon=10^{-2}$:
\begin{equation}
\mathrm{sMAPE}_{\tau}
=
\frac{1}{H}\sum_{h=1}^{H}
\frac{2|y^{\star}_{\tau,h}-\widehat{y}_{\tau,h}|}
{\max(|y^{\star}_{\tau,h}|+|\widehat{y}_{\tau,h}|,\varepsilon)}.
\end{equation}
Here $\mathrm{sMAPE}_{\tau}\in[0,2]$, and lower values indicate better
deterministic trajectory accuracy.

\subsection{Implementation Details}
\label{app:impl}

\method is implemented in Python on top of
LangChain\footnote{\url{https://www.langchain.com/}} for the LLM-agent loop and the Model Context Protocol (MCP)\footnote{\url{https://modelcontextprotocol.io/}} for the executable tool harness, with each task-local workspace $\mathcal{W}$ backed by an in-memory MCP server built using FastMCP\footnote{\url{https://gofastmcp.com/}}. Concurrent rollouts borrow isolated server--agent slots from a worker-indexed pool, so the workspace state of one task cannot leak into another.
All experiments run on a Windows 11 machine with a 13th Gen Intel(R) Core(TM) i9-13900H CPU, 64GB RAM, and an NVIDIA RTX A4500 GPU; the code is platform-agnostic and runs unchanged on Linux. No local GPU is required, as the LLM policy is queried through the OpenAI API and all in-process computation is CPU-only. The Python version is 3.11.4; dependencies are listed in the README of the supplementary code.

\section{Full Experiment Results}
\label{ap:full_experiments}

In this section, we provide additional experimental results to complement the main paper. 
Figure~\ref{fig:tsaia-ablation} extends the component ablation on TSAIA to three backbone model families. 
Across GPT, Gemini, and Claude backbones, removing tool use, capability evolution, or memory retrieval consistently degrades performance. 
This confirms that each component contributes to \method and further shows that the proposed agent harness is compatible with different model providers rather than being tied to a specific LLM family.

Figure~\ref{fig:cik-radar} provides a context-type breakdown on the CiK benchmark. 
\method achieves strong RCRPS across intemporal, historical, future, covariate, and causal information settings, indicating that the proposed framework can effectively leverage diverse forms of contextual information. 
We further report domain-wise CiK results in Table~\ref{tab:cik_resuls_domain}. 
\method obtains the best average RCRPS, ranks first in five out of eight domains, and achieves the smallest average rank, demonstrating robust performance across heterogeneous real-world time-series domains.
Figure~\ref{fig:tsrbench-bank-size} reports a memory-bank size ablation on TSRBench, sweeping the per-subtask training fraction, indicating that the multimodal memory bank is sample-efficient: a small training set already suffices to construct an effective bank, and \method is robust to the size of the training corpus used.
Finally, Table~\ref{tab:tsrbench_ap_open} provides fine-grained TSRBench results against open-source LLMs, VLMs, and TSLLMs. 
The results show that \method achieves strong performance across perception, reasoning, prediction, and decision-making subtasks, outperforming substantially larger open-source models despite using a compact backbone.

\begin{figure}[t]
    \centering
    \includegraphics[width=0.95\linewidth]{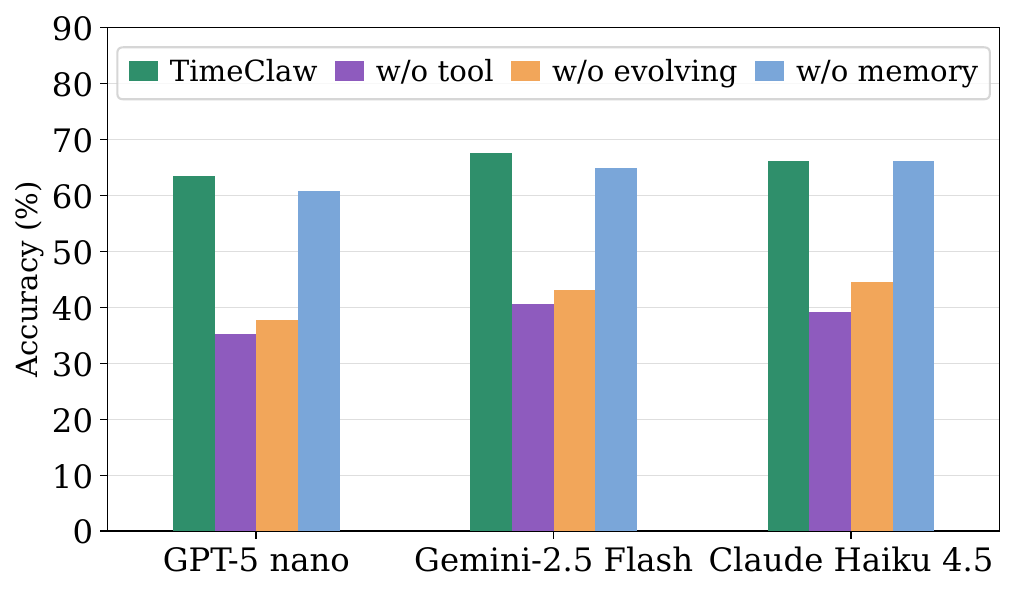}
    \caption{
Across three backbone model families, removing tool use, capability evolution, or memory retrieval consistently degrades performance, validating the contribution of each component. 
The consistent gains of \method across GPT, Gemini, and Claude backbones further show its compatibility with different model providers.
}
\label{fig:tsaia-ablation}
\end{figure}

\begin{figure}[t]
    \centering
    \includegraphics[width=0.95\linewidth]{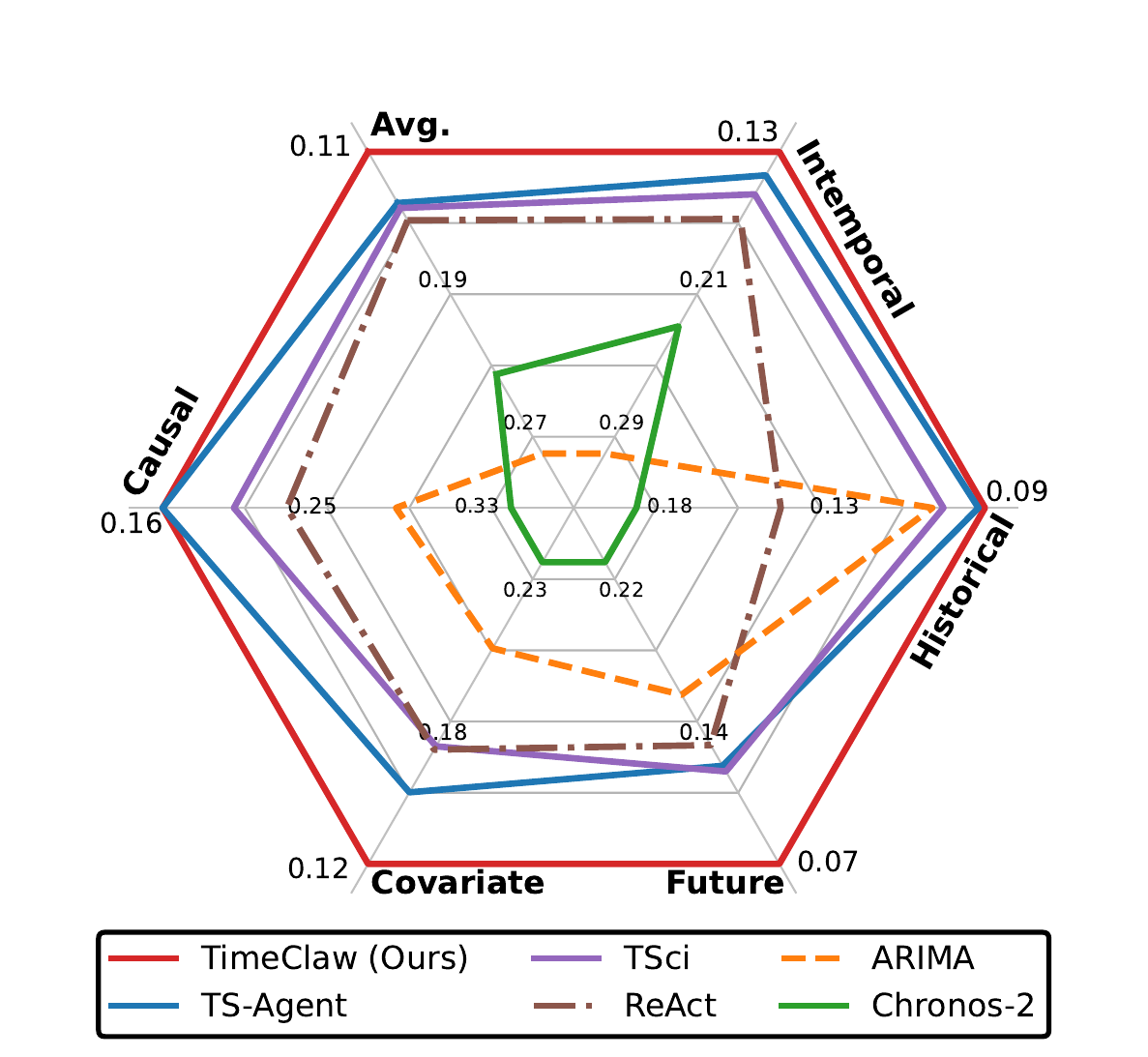}
    \caption{RCRPS comparison across CiK context types. Lower RCRPS is better; values closer to the outer boundary indicate stronger performance.}
    \label{fig:cik-radar}
\end{figure}

\input{tables/cik_domain}

\input{tables/tsr_ap_open}

\begin{figure}[t]
    \centering
    \includegraphics[width=0.95\linewidth]{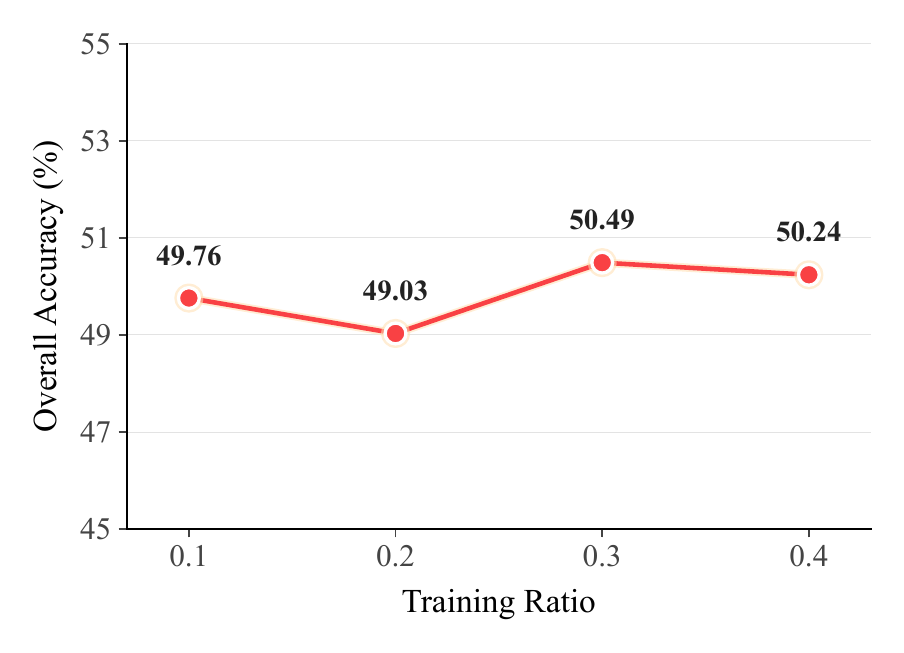}
    \caption{Overall accuracy on TSRBench as a function of the fraction of available trajectories used to construct the memory bank (with $k = 3$ retrieved neighbors at test time). Accuracy varies by less than $1.5\%$ across $\text{train\_ratio} \in \{0.1, 0.2, 0.3, 0.4\}$, indicating that the multimodal memory bank is sample-efficient: a small training set already yields a fully effective bank, and \method is robust to the construction-set size.}
    \label{fig:tsrbench-bank-size}
\end{figure}

\input{1_appendix/ap_baselines}

\input{1_appendix/case_study}


%% file: 1_appendix/ap_analysis.tex
\section{Why NOT Trusting LLMs for Contextualized Time Series?}
\label{ap:analysis}

We provide a theoretical analysis of why directly trusting LLMs over serialized
contextualized time series can be problematic. The analysis focuses on two
sources of misalignment: numerical values are first mapped into a discrete token
space, and long temporal evidence is then processed under a finite and unevenly
allocated attention budget. We begin with the tokenization-induced mismatch
between numerical distance and token distance.

Proposition~\ref{prop:token_numeric} shows that numerical closeness is not
necessarily preserved after tokenization. This provides a geometric explanation
for why surface-token similarity may be a poor proxy for numerical similarity.

\begin{proposition}[Token Distance Is Not Numerical Distance]
\label{prop:token_numeric}
Let $\eta:\mathbb{R}\rightarrow\mathcal{V}^*$ be a tokenizer-induced
representation from real values to token sequences. In general, edit distance or
embedding distance between $\eta(x)$ and $\eta(x')$ is not order-preserving with
respect to numerical distance $|x-x'|$.
\end{proposition}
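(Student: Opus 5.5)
Since the proposition is a "not in general" statement, the plan is to exhibit one natural tokenizer-induced representation $\eta$ and explicit real values that violate order preservation. First we fix what order preservation means: a map $d_{\mathrm{tok}}$ on token sequences is order-preserving with respect to numerical distance if for all $x,x',x''$,
\[
|x-x'| < |x-x''| \;\Longrightarrow\; d_{\mathrm{tok}}(\eta(x),\eta(x')) \le d_{\mathrm{tok}}(\eta(x),\eta(x'')).
\]
To refute this it suffices to find a single triple $(x,x',x'')$ where the numerically closer pair is strictly farther in token space.

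For edit distance, take the standard decimal serialization with digit-level (or common subword) tokenization, as used when series are written as text in Section~\ref{sec:preliminary}. Choose $x=0.999$, $x'=1.000$, $x''=0.900$. Then $|x-x'|=0.001 < 0.099=|x-x''|$, yet $\eta(0.999)$ and $\eta(1.000)$ differ in every digit position, giving edit distance $4$, while $\eta(0.999)$ and $\eta(0.900)$ differ in only two positions, giving edit distance $2$. A carry across a place-value boundary makes numerically adjacent values maximally different in surface form, while values far apart can share long prefixes. A second example, $x=1000$, $x'=999$, $x''=1900$, shows the effect also comes from length changes, which is useful because it does not rely on the digit-wise tokenization.

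For embedding distance, the argument is structural rather than about a specific trained model. Suppose the token-level embedding of $\eta(x)$ is any function of the token sequence alone, for example a sum, pooling, or the output of a fixed encoder. Then $x=0.999$ and $x''=0.900$ share the tokens \texttt{0}, \texttt{.}, \texttt{9}, while $x'=1.000$ shares only \texttt{.}. Any pooling that is monotone in token overlap therefore places $\eta(x)$ nearer to $\eta(x'')$. For a general argument covering arbitrary embeddings, we would use a counting or pigeonhole step. The vocabulary $\mathcal{V}^*$ is discrete and the representation of $[0,1]$ at fixed precision passes through finitely many tokens per value, so the induced metric on numbers is piecewise constant. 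A piecewise-constant metric cannot strictly respect the ordering of $|x-x'|$ over a continuum: there exist pairs with different numerical distance that receive the same token distance, and across segment boundaries the ordering flips.

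The main obstacle is that "embedding distance" is not fully specified, since a carefully engineered embedding could respect order on some finite set. I will therefore frame the claim, matching "in general," as the failure of order preservation for standard tokenizers, shown by the explicit counterexamples. I will support this with the discreteness argument, which shows that no tokenizer with bounded tokens per number can be strictly order-preserving uniformly over $\mathbb{R}$. Making that discreteness step rigorous without overclaiming is the part I expect to require the most care.
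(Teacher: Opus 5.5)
Your proposal is correct and takes essentially the paper's route: the paper also refutes order preservation with a single character-level counterexample built on a carry across a place-value boundary ($x_1=99.999999$, $x_2=100.0$, $x_3=98.999999$), the same mechanism as your triple $0.999, 1.000, 0.900$ with edit distances $4>2$. The discreteness/pigeonhole add-on is unnecessary, and as written it only produces ties, which your $\le$-definition of order preservation permits, while the boundary ``flip'' is asserted rather than proved, so let the explicit counterexample carry the proof.
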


\begin{proof}
Tokenizers operate over character or subword patterns rather than numerical
metric structure. Hence there can exist $x_1,x_2,x_3$ such that
\begin{equation}
    |x_1-x_2| < |x_1-x_3|,
\end{equation}
but
\begin{equation}
    d_{\mathrm{tok}}(\eta(x_1),\eta(x_2))
    >
    d_{\mathrm{tok}}(\eta(x_1),\eta(x_3)).
\end{equation}
For example, under a character-level representation,
$x_1=99.999999$, $x_2=100.0$, and $x_3=98.999999$ satisfy
$|x_1-x_2|=0.000001<1.0=|x_1-x_3|$, while $\eta(99.999999)$ and $\eta(100.0)$ differ in more character positions than $\eta(99.999999)$ and $\eta(98.999999)$ (we can alter number of 9s until this holds). Therefore, token similarity does not generally preserve numerical proximity.
\end{proof}

This mismatch is consistent with empirical findings that LLMs often struggle with
numerical reasoning tasks such as arithmetic, numerical retrieval, and magnitude
comparison, partly because numbers are represented through surface token patterns
rather than continuous magnitudes~\citep{li2025exposing}. Prior work also shows that
number-tokenization choices can substantially affect arithmetic performance,
indicating that numerical reasoning is sensitive to the discrete token
representation itself~\citep{singh2024tokenization}. Similar token-level limitations
appear in exact symbolic counting tasks: recent studies report that LLMs can fail
on simple letter-counting problems, where the model must reason over characters
rather than semantic word meaning~\citep{xu2025llm,fu2024large}.
The widely discussed ``strawberry'' counting example is an informal illustration
of the same issue: a word may be represented as subword tokens rather than as
explicit characters, so exact character-level counting is not directly aligned
with the model's token-level representation. This limitation is not unique to
``strawberry''; for example, prior work reports that an LLM can answer that the
word ``confusable'' contains two occurrences of the letter ``a'', while the
correct answer is one~\citep{xu2025llm}. Such examples suggest that even
simple character-level operations may be unreliable when they are mediated by
subword token representations rather than explicit symbolic access to
characters.

Beyond pairwise distance distortion, tokenization can also obscure the internal
structure of numbers. In decimal notation, each digit has a fixed place value,
whereas subword tokenization may group different spans of digits into variable
symbolic chunks. Proposition \ref{prop:place_value} formalizes this place-value mismatch.

\begin{proposition}[Subword Tokenization Does Not Preserve Place-Value Structure]
\label{prop:place_value}
Let $x \in \mathbb{N}$ be a decimal integer with digit expansion
\begin{equation}
    x = \sum_{k=0}^{m-1} d_k 10^k,
    \qquad d_k \in \{0,\ldots,9\}.
\end{equation}
Let $\eta(x)=(v_1,\ldots,v_J)\in\mathcal{V}^*$ be a tokenizer-induced
subword representation of the decimal string of $x$. In general, there does not
exist a fixed token-level decomposition
\begin{equation}
    x = \sum_{j=1}^{J} \phi(v_j) 10^{\alpha_j},
\end{equation}
where $\phi:\mathcal{V}\rightarrow\mathbb{Z}$ and $\alpha_j$ are tokenizer-independent
place-value exponents determined only by token position $j$.
\end{proposition}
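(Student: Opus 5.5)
The plan is to prove the statement by exhibiting a concrete counterexample, in the same spirit as the proof of Proposition~\ref{prop:token_numeric}. Suppose, for contradiction, that there were a fixed map $\phi:\mathcal{V}\rightarrow\mathbb{Z}$ and exponents $\alpha_j$ that depend only on the token position $j$, such that $x=\sum_{j}\phi(v_j)10^{\alpha_j}$ for every $x$. It then suffices to find two integers whose subword segmentations force incompatible values of the same $\alpha_j$. The natural source of such a conflict is that a greedy or BPE-style tokenizer chunks digit strings into pieces of variable length, often left-aligned (for example, groups of up to three digits starting from the most significant digit).

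The first step is to fix a tokenizer of this kind: the decimal string is split left to right into chunks of at most three digits. Take $x=1234$, tokenized as $(\texttt{123},\texttt{4})$, and $x'=12345$, tokenized as $(\texttt{123},\texttt{45})$. Since $\phi$ is fixed, the natural reading gives $\phi(\texttt{123})=123$. For $x$, the identity $1234=\phi(\texttt{123})10^{\alpha_1}+\phi(\texttt{4})10^{\alpha_2}$ forces the first token to carry place value $10^{1}$. For $x'$, the identity forces the first token to carry $10^{2}$. Both tokens sit at position $j=1$, yet they require different exponents, which contradicts the assumption that $\alpha_1$ depends only on position.

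The main obstacle is ruling out exotic choices of $\phi$ and $\alpha_j$ that satisfy the two equations non-canonically, since the system is underdetermined. To close this gap, I will add more test integers that share token strings. For instance, the tokens $\texttt{1}$ and $\texttt{12}$ pin down $\phi(\texttt{1})=1$ and $\phi(\texttt{12})=12$ when each appears alone with $\alpha_1=0$. Enlarging the family to $1$, $12$, $123$, $1234$ and $12345$, and using single-token numbers to force $\alpha_1=0$, makes the integer equations inconsistent: $\alpha_1=0$ would give $1234=123+\phi(\texttt{4})10^{\alpha_2}$ but also $12345=123+\phi(\texttt{45})10^{\alpha_2}$. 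The second equation requires $10^{\alpha_2}\mid 12222$, hence $\alpha_2\le 1$. The first requires $10^{\alpha_2}\mid 1111$, hence $\alpha_2=0$. That leaves $\phi(\texttt{4})=1111$, which is not a contradiction by itself, so one further number is needed, such as $4$ tokenized as $(\texttt{4})$, giving $\phi(\texttt{4})=4$. With that addition the contradiction is complete.

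I would then remark that "in general" means the claim is only that some realistic tokenizer breaks the decomposition, not that every tokenizer does. A digit-by-digit tokenizer with right-aligned positions would preserve the decomposition, and noting this clarifies the scope of the proposition.
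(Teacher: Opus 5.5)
Your argument is correct for the statement as literally formalized (left-to-right token indices, nonnegative integer exponents), and it is more rigorous than the paper's proof, which is purely illustrative. The paper considers hypothetical tokenizations of two decimal constants of identical shape, $3.14159\mapsto(3,.14,159)$ and $2.71828\mapsto(2.7,18,28)$. It observes that the token at each index $j$ spans different digit places, and concludes that place value is not a function of $j$. That is essentially your ``natural reading'' step: it tacitly evaluates tokens at face value and never rules out other choices of $\phi$. It also uses non-integers, although the statement assumes $x\in\mathbb{N}$. Your closure step stays in the integers and handles every integer-valued $\phi$, which is what the existential claim actually requires.

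What the paper's pair buys in exchange is that both strings have the same length and the same $J$, so the mismatch is purely a chunk-boundary effect. Your closure instead exploits the fact that, under left indexing, position $1$ must serve one-token and multi-token strings alike. The same reasoning would refute a digit-by-digit tokenizer (e.g.\ with $1$, $4$, $14$), as your closing remark about right-aligned positions already hints.

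Two repairs are worth making. First, the exponents must be declared integers, and your step $\alpha_1=0$ needs them nonnegative. With real exponents your six equations are actually solvable: take $\phi(\texttt{1})=1111$, $\phi(\texttt{12})=13332$, $\phi(\texttt{123})=136653$, $\phi(\texttt{4})=4444$, $\phi(\texttt{45})=48888$, $10^{\alpha_1}=1/1111$, and $10^{\alpha_2}=1/4$. Second, $12222$ is not divisible by $10$, so that equation already gives $\alpha_2=0$ rather than just $\alpha_2\le 1$. Also, $1$, $12$, $12345$ are not needed for your contradiction; $\{4,123,1234\}$ suffices.

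A cleaner fix for both issues is to add $45\mapsto(\texttt{45})$. Under left indexing, the single-token equations give $\phi(\texttt{45})/\phi(\texttt{4})=45/4$, while $123$, $1234$, $12345$ give $12222/1111$. Under right indexing, $1234$ and $12345$ force $\phi(\texttt{123})10^{\beta_2}$ to equal both $1230$ and $12300$. Either way the contradiction needs no integrality assumption. In the right-indexed form it is exactly the chunk-boundary effect the paper illustrates.
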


\begin{proof}
Decimal notation has a fixed digit-level positional structure: the contribution
of digit $d_k$ is always $d_k10^k$. Subword tokenization, however, partitions the
decimal string into variable-length segments. That is, a number may be tokenized
as chunks of different lengths depending on surface string patterns, vocabulary
entries, and tokenizer merge rules.

Consider two classical numerical constants whose decimal strings may be tokenized
with different segment lengths, e.g.,
\begin{equation}
\begin{split}
    \eta(3.14159) = (3, .14, 159) \\
    \eta(2.71828) = (2.7, 18, 28)
\end{split}
\end{equation}
where tokens are written by their corresponding string pieces for illustration.
While $3.14159$ and $2.71828$ are familiar approximations of $\pi$ and $e$,
their token-level segmentations may expose different symbolic structures. In the
first case, the token ``3'' covers the integer part, ``.14'' covers the decimal
point together with the $10^{-1}$ and $10^{-2}$ places, and ``159'' covers the
$10^{-3}$, $10^{-4}$, and $10^{-5}$ places. In the second case, the token ``2.7''
covers the integer part, the decimal point, and the $10^{-1}$ place, while
``18'' and ``28'' cover the $10^{-2}$--$10^{-3}$ and $10^{-4}$--$10^{-5}$ places,
respectively. Therefore, the numerical place value associated with a token cannot
be determined solely from its token index $j$.

Hence, unless the tokenizer is explicitly constrained to respect digit-level
place values, the token sequence $\eta(x)$ does not provide a fixed token-level
basis aligned with the decimal expansion of $x$. Numerical magnitude must
therefore be reconstructed indirectly from variable-length symbolic chunks,
rather than being natively represented by the tokenization.
\end{proof}

The previous two propositions characterize representation-level mismatches: tokenization does not preserve either numerical geometry or decimal place-value structure. We next show that this mismatch propagates to the learning objective itself: predicting the next token in symbolic space is not equivalent to predicting the next value in numerical space.

\begin{proposition}[Next-Token Prediction Is Not Next-Value Prediction]
\label{prop:ntp_not_nvp}
Let $(\mathcal{V}^*, d_{\mathrm{tok}})$ be the token-sequence space induced by a
tokenizer $\eta:\mathbb{R}\rightarrow\mathcal{V}^*$, and let
$(\mathbb{R}, d_{\mathrm{num}})$ be the numerical space with
$d_{\mathrm{num}}(x,x')=|x-x'|$. Consider a temporal process
$\{X_t\}_{t\geq 1}$ with $X_t\in\mathbb{R}$ and its tokenized observation
$Z_t=\eta(X_t)$. A next-token predictor minimizes
\begin{equation}
    \mathcal{L}_{\mathrm{tok}}(p)
    =
    \mathbb{E}\left[-\log p(Z_{t+1}\mid Z_{1:t})\right],
\end{equation}
whereas a next-value predictor minimizes
\begin{equation}
    \mathcal{L}_{\mathrm{num}}(g)
    =
    \mathbb{E}\left[d_{\mathrm{num}}(g(X_{1:t}),X_{t+1})^2\right].
\end{equation}
In general, an optimizer of $\mathcal{L}_{\mathrm{tok}}$ is not an optimizer of
$\mathcal{L}_{\mathrm{num}}$.
\end{proposition}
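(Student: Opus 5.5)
Since the claim is an ``in general'' non-equivalence, a single explicit counterexample suffices: a process $\{X_t\}$, a tokenizer $\eta$, and an optimizer of $\mathcal{L}_{\mathrm{tok}}$ whose induced point prediction does not minimize $\mathcal{L}_{\mathrm{num}}$. First we fix how a next-token predictor $p$ yields a numerical prediction $g_p$, since $\mathcal{L}_{\mathrm{tok}}$ and $\mathcal{L}_{\mathrm{num}}$ act on different objects. The natural decoding is greedy or mode decoding, $g_p(X_{1:t})=\eta^{-1}(\arg\max_z p(z\mid Z_{1:t}))$, which is how LLMs emit a forecast. The argument then rests on a standard fact. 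When $\eta$ is injective, the cross-entropy minimizer $p^\star$ is the true conditional law of $Z_{t+1}$, hence of $X_{t+1}$. The $\mathcal{L}_{\mathrm{num}}$ minimizer is instead the conditional mean $\mathbb{E}[X_{t+1}\mid X_{1:t}]$. Mode and mean generally differ.

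Concretely, take an i.i.d.\ process so that conditioning is trivial, with $X_{t+1}\in\{0,1,10\}$ and probabilities $(0.4,0.3,0.3)$. Use any injective $\eta$ at character level. Then $p^\star(\eta(0))=0.4$ is the unique mode, so greedy decoding outputs $0$. The conditional mean is $3.3$, and $\mathcal{L}_{\mathrm{num}}(g_{p^\star})=\mathbb{E}[X^2]=30.3$ exceeds $\mathrm{Var}(X)=30.3-10.89$. So the decoded $\mathcal{L}_{\mathrm{tok}}$-optimizer is strictly suboptimal. To echo Proposition~\ref{prop:token_numeric}, we also note a second point: $\mathcal{L}_{\mathrm{tok}}$ is invariant under any relabeling of the token sequences. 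Hence it is blind to $d_{\mathrm{num}}$. Two processes with identical token-level laws but different numerical geometry would share the same $p^\star$ while having different numerical optima.

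The main obstacle is the decoding ambiguity. Suppose one instead decodes by the mean under $p^\star$, $g(X_{1:t})=\sum_z p^\star(z\mid Z_{1:t})\,\eta^{-1}(z)$. This recovers the conditional mean exactly whenever $\eta$ is injective and $p^\star$ is the true conditional, so the counterexample would fail. We handle this in two ways. First, we state the proposition for the decoding rules LLMs actually use: mode or greedy decoding, or sampling, which yields the conditional variance twice over. Second, we use capacity restriction, where mean decoding also breaks. Restrict $p$ to a parametric family with autoregressive digit-by-digit factorization and limited expressiveness, such as a model forced to predict digits independently. Its cross-entropy projection then generally does not have the true mean: with digits modeled independently, the product of marginals can shift the implied mean, e.g.\ for $X\in\{19,91\}$ equiprobable it puts mass on $11,99$. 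The check would then be that the mean of the product-of-marginals is unchanged there, so a three-digit-pattern example may be needed.

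We therefore lead with the mode-decoding counterexample as the core argument, and present mean decoding under a restricted model class as the robustness remark.
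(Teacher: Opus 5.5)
Your core argument is correct and uses the same skeleton as the paper's proof. The log-loss optimum is the true conditional law of $Z_{t+1}$, the squared-loss optimum is $\mathbb{E}[X_{t+1}\mid X_{1:t}]$, and the two are linked through mode decoding, which is exactly the map $\delta(\arg\max_z p^\star(z\mid Z_{1:t}))$ the paper writes down. The difference is in how the last step is closed. The paper asserts that this identity fails ``in general'' and supports it by appealing to Proposition~\ref{prop:token_numeric} and to the symbolic nature of the loss, without giving an instance. Your i.i.d.\ example on $\{0,1,10\}$ with weights $(0.4,0.3,0.3)$ actually proves the claim: the decoded loss is $30.3$ against the optimum $19.41$. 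It also locates the mechanism more accurately. For injective $\eta$, the decoded mode is the mode of the law of $X_{t+1}$ whatever the tokenizer's geometry, so the failure is a mode-versus-mean effect rather than a consequence of Proposition~\ref{prop:token_numeric}. Your observation that mean decoding under an exact $p^\star$ recovers the conditional mean is also correct, and it states precisely the scope the paper only gestures at with ``a decoding rule that preserves the numerical geometry.'' Your example also survives genuine token-by-token greedy decoding: at character level it outputs $1$ or $10$ (a tie at the second step), with losses $24.7$ and $64.3$, both suboptimal.

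The step that would fail is your robustness remark. Suppose the model class consists of all fixed-width, position-wise independent digit distributions. Then the cross-entropy projection is the product of the true digit marginals. Since $\mathbb{E}[X]=\sum_k 10^k\,\mathbb{E}[d_k]$ depends only on those marginals, the decoded mean always equals the true mean. That is why your $\{19,91\}$ check came out unchanged, and no three-digit example of that type will help. You need something the linearity argument does not cover. One option is variable-length strings, where a position's place value depends on the string length, in the spirit of Proposition~\ref{prop:place_value}. For $X\in\{1,20\}$ equiprobable, position-wise independent characters with an end symbol put mass $1/4$ on each of $1,2,10,20$, giving a decoded mean of $8.25\neq 10.5$. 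Another option uses the fact that $\eta:\mathbb{R}\to\mathcal{V}^*$ cannot be injective for a finite vocabulary, since $\mathcal{V}^*$ is countable. Mean decoding then recovers only $\mathbb{E}[r(X_{t+1})\mid X_{1:t}]$ for the rounding $r$ induced by serialization, which is suboptimal by exactly the squared conditional rounding bias. For the same reason, your injectivity assumption should be stated on the support of $X_{t+1}$, not on all of $\mathbb{R}$.
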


\begin{proof}
The next-token objective is a proper scoring rule over the discrete symbolic
space $\mathcal{V}^*$; its Bayes-optimal solution is the conditional token
distribution
\begin{equation}
    p^\star(z\mid Z_{1:t})
    =
    \mathbb{P}(Z_{t+1}=z\mid Z_{1:t}).
\end{equation}
By contrast, under squared numerical loss, the Bayes-optimal next-value predictor is
\begin{equation}
    g^\star(X_{1:t})
    =
    \mathbb{E}[X_{t+1}\mid X_{1:t}].
\end{equation}
These two Bayes solutions live in different spaces: $p^\star$ is a distribution
over token strings, while $g^\star$ is an element of the numerical space
$\mathbb{R}$.

For the two objectives to be equivalent, the tokenizer-induced representation
would need to preserve the task-relevant numerical structure. In particular,
there must exist a decoding map $\delta:\mathcal{V}^*\rightarrow\mathbb{R}$ such
that token-level optimality implies numerical optimality:
\begin{equation}
    \delta\left(
    \arg\max_{z\in\mathcal{V}^*}
    p^\star(z\mid Z_{1:t})
    \right)
    =
    \mathbb{E}[X_{t+1}\mid X_{1:t}]
\end{equation}
for all conditional distributions of $X_{t+1}$. This condition does not hold in
general. First, Proposition~\ref{prop:token_numeric} shows that token distance is
not order-preserving with respect to numerical distance. Therefore, errors that
are small in token space need not be small in numerical space, and vice versa.
Second, the token objective assigns loss according to symbolic likelihood rather
than numerical deviation. For example, predicting the tokenization of $100.0$
instead of $99.999999$ may incur a large token-level discrepancy despite a small
numerical error, while predicting a token string closer in surface form can be
numerically farther away.

Hence, minimizing next-token prediction risk does not generally minimize
next-value prediction risk. The two coincide only under additional assumptions,
such as a value-sufficient tokenizer and a decoding rule that preserves the
numerical geometry relevant to the task loss.
\end{proof}

So far, the analysis concerns the numerical representation induced by
tokenization. Moreover, contextualized time series introduce another challenge: temporal
signals can be long, sparse, and distributed across many timestamps. Even when
all serialized tokens fit into the nominal context window, relevant temporal
evidence must compete with many other tokens under a fixed attention budget.

\begin{proposition}[Long-Context Temporal Information Dilution]
\label{prop:long_context_dilution}
Let a contextualized time series $\mathcal{X}=(\mathbf{X}_{1:T},\mathcal{C})$
be serialized into a token sequence
\begin{equation}
    \mathbf{z}_{1:L}=\operatorname{Tokenize}(\sigma(\mathcal{X})).
\end{equation}
Assume that a temporal task depends on a set of relevant temporal positions
$\mathcal{R}\subseteq [L]$. In a self-attention layer, the output at query
position $i$ is
\begin{equation}
    \mathbf{h}_i
    =
    \sum_{j=1}^{L}\alpha_{ij}\mathbf{v}_j,
    \quad
    \sum_{j=1}^{L}\alpha_{ij}=1,\quad \alpha_{ij}\geq 0 .
\end{equation}
If the attention mass assigned to the relevant positions is bounded by
\begin{equation}
    A_i(\mathcal{R})=\sum_{j\in\mathcal{R}}\alpha_{ij}\leq \epsilon,
\end{equation}
then the contribution of task-relevant temporal evidence to $\mathbf{h}_i$ is
at most $\epsilon$ in attention mass. Therefore, as $L$ grows, any mechanism that
spreads attention over many irrelevant tokens or concentrates attention on
position-biased sink tokens can dilute the effective use of relevant temporal
information.
\end{proposition}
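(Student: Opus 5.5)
The plan is to split $\mathbf{h}_i$ into a relevant part and an irrelevant part and bound the relevant part using only convexity of the attention weights. Write
\begin{equation}
    \mathbf{h}_i
    =
    \underbrace{\sum_{j\in\mathcal{R}}\alpha_{ij}\mathbf{v}_j}_{\mathbf{h}_i^{\mathcal{R}}}
    +
    \underbrace{\sum_{j\notin\mathcal{R}}\alpha_{ij}\mathbf{v}_j}_{\mathbf{h}_i^{\bar{\mathcal{R}}}} .
\end{equation}
I will make ``contribution at most $\epsilon$ in attention mass'' precise in two ways.

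First, when $A_i(\mathcal{R})>0$, the relevant part can be written as $\mathbf{h}_i^{\mathcal{R}} = A_i(\mathcal{R})\,\bar{\mathbf{v}}_{\mathcal{R}}$, where $\bar{\mathbf{v}}_{\mathcal{R}}$ is the attention-weighted average of $\{\mathbf{v}_j\}_{j\in\mathcal{R}}$. This average lies in their convex hull. So the relevant evidence enters $\mathbf{h}_i$ as a convex combination with coefficient $A_i(\mathcal{R})\le\epsilon$, and the irrelevant average carries coefficient $1-A_i(\mathcal{R})\ge 1-\epsilon$.

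Second, by the triangle inequality, $\|\mathbf{h}_i^{\mathcal{R}}\|\le A_i(\mathcal{R})\max_{j\in\mathcal{R}}\|\mathbf{v}_j\|\le\epsilon\max_{j\in\mathcal{R}}\|\mathbf{v}_j\|$. A sensitivity version of the same bound follows directly. If the relevant values are perturbed, $\mathbf{v}_j\mapsto\mathbf{v}_j'$ for $j\in\mathcal{R}$, and the attention weights are held fixed, then $\|\mathbf{h}_i-\mathbf{h}_i'\|\le\epsilon\max_{j\in\mathcal{R}}\|\mathbf{v}_j-\mathbf{v}_j'\|$. So changing the task-relevant temporal evidence can move the output by at most an $\epsilon$ fraction.

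For the ``as $L$ grows'' part, I would give two illustrative mechanisms rather than a theorem. In the spreading case, with roughly uniform softmax scores, $\alpha_{ij}\approx 1/L$ and hence $A_i(\mathcal{R})\approx|\mathcal{R}|/L\to 0$ when $|\mathcal{R}|$ is fixed. More generally, if the logits are bounded so that $s_{ij}\in[-B,B]$, then $\alpha_{ij}\le e^{2B}/L$ and $A_i(\mathcal{R})\le|\mathcal{R}|e^{2B}/L$. In the sink case, if a sink token receives mass $\ge 1-\epsilon$, the bound holds immediately.

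The main obstacle is that the final sentence is informal, since ``dilute the effective use'' is not a formal quantity. I would state explicitly that the claim is the bound above under the attention-mass hypothesis. The bounded-logit computation then serves as the concrete sufficient condition under which $\epsilon=O(|\mathcal{R}|/L)$. I would also note that multi-layer and residual effects are beyond this single-layer statement.
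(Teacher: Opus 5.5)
Your proposal is correct and follows the paper's proof: you split $\mathbf{h}_i$ into the sum over $\mathcal{R}$ and the sum over its complement, and bound the relevant part by the triangle inequality. You use the slightly sharper $\max_{j\in\mathcal{R}}\|\mathbf{v}_j\|$ where the paper uses a global bound $M$, and your bounded-logit estimate $A_i(\mathcal{R})\le|\mathcal{R}|e^{2B}/L$ makes the ``as $L$ grows'' clause quantitative where the paper leaves it informal; note only that the sink case requires the sink token to lie outside $\mathcal{R}$.
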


\begin{proof}
Decompose the attention output into relevant and irrelevant parts:
\begin{equation}
    \mathbf{h}_i
    =
    \sum_{j\in\mathcal{R}}\alpha_{ij}\mathbf{v}_j
    +
    \sum_{j\notin\mathcal{R}}\alpha_{ij}\mathbf{v}_j .
\end{equation}
The total coefficient assigned to the relevant component is exactly
\begin{equation}
    A_i(\mathcal{R})=\sum_{j\in\mathcal{R}}\alpha_{ij}.
\end{equation}
If $A_i(\mathcal{R})\leq \epsilon$, then the relevant component contributes no
more than an $\epsilon$ fraction of the convex combination defining
$\mathbf{h}_i$. In particular, if $\|\mathbf{v}_j\|\leq M$ for all $j$, then
\begin{equation}
    \left\|
    \sum_{j\in\mathcal{R}}\alpha_{ij}\mathbf{v}_j
    \right\|
    \leq
    \sum_{j\in\mathcal{R}}\alpha_{ij}\|\mathbf{v}_j\|
    \leq
    \epsilon M.
\end{equation}
Thus, when relevant temporal positions receive little attention mass, their
effect on the attention output is bounded regardless of their task relevance.
Long serialized time series increase the number of irrelevant or weakly relevant
tokens competing for the same unit attention mass. Moreover, if some tokens act
as attention sinks and absorb disproportionate mass, the remaining mass available
to temporally relevant positions is further reduced. This proves the claim.
\end{proof}

This proposition shows that serialized temporal evidence competes within a fixed
attention budget. In long time series, relevant timestamps may be sparse, located
in the middle of the context, or separated by long lags. Empirically, LLMs often
show degraded retrieval when relevant information is placed in the middle of a
long context, a phenomenon known as ``lost in the middle'' \citep{liu2024lost, laban2025llms}.
Meanwhile, attention-sink studies show that some tokens can attract
disproportionate attention mass independent of semantic relevance \citep{xiao2024efficient, gu2025attention, kang2025see}.
Together, these effects suggest that even when a serialized time series fits
within the nominal context window, its temporally relevant information may not be
uniformly or reliably used.

\begin{corollary}[No Guaranteed Length Extrapolation]
\label{cor:length_extrapolation}
Suppose an LLM is trained on serialized sequences of length at most $L_{\max}$.
For a test-time time series whose serialization length satisfies $L>L_{\max}$,
performance on temporal dependencies requiring positions beyond the training
length is not guaranteed by the training objective alone.
\end{corollary}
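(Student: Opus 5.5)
The corollary is a negative statement: it claims only the absence of a guarantee. The natural route is therefore a construction. I will exhibit two models that behave identically on everything the training objective can see but differ on long inputs, and then appeal to Proposition~\ref{prop:long_context_dilution} to explain why such a divergence is a concrete risk. The key observation is that the training loss $\mathcal{L}_{\mathrm{tok}}$ of Proposition~\ref{prop:ntp_not_nvp} is an expectation over the training distribution. That distribution is supported only on token sequences $\mathbf{z}_{1:L}$ with $L\leq L_{\max}$. Consequently, the objective constrains the conditional $p(\cdot\mid \mathbf{z}_{1:L})$ only for $L\leq L_{\max}$.

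\textbf{Step 1 (objective is blind beyond $L_{\max}$).} Write the empirical or population risk as a sum over prefixes of length at most $L_{\max}$. Any two predictors $p$ and $p'$ that agree on all such prefixes attain identical risk. In particular, a minimizer $p^\star$ remains a minimizer after its conditionals on prefixes of length $>L_{\max}$ are modified arbitrarily.

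\textbf{Step 2 (construct a failing minimizer).} Take a temporal dependency that needs position $j>L_{\max}$, for example a target $X_{t+1}$ that depends on a relevant set $\mathcal{R}$ containing positions beyond $L_{\max}$. Define $p'$ to agree with $p^\star$ for $L\leq L_{\max}$. For $L>L_{\max}$, let $p'$ ignore $\mathcal{R}$, either by outputting a fixed distribution or by putting zero attention mass on $\mathcal{R}$. For the attention-based variant, Proposition~\ref{prop:long_context_dilution} with $\epsilon=0$ shows that the contribution of $\mathcal{R}$ to $\mathbf{h}_i$ vanishes. The output is then independent of the relevant evidence, so its error on the dependency can be made bounded away from optimal.

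\textbf{Step 3 (connect to realistic architectures).} Fully arbitrary predictors make the claim trivial. To show the issue is not artificial, I would also note two concrete mechanisms. First, learned or extrapolated positional encodings for indices $>L_{\max}$ receive no gradient signal. Second, the attention-sink and dilution mechanism of Proposition~\ref{prop:long_context_dilution} lets the mass on $\mathcal{R}$ shrink as $L$ grows. Either mechanism allows a training-optimal transformer to degrade at $L>L_{\max}$.

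The main obstacle is Step 3. The claim must be stated as non-identifiability within a realistic hypothesis class, without proving that every transformer fails. I would settle for existence: there is a parameter setting, with positional parameters unseen in training, that achieves minimal training loss yet fails the long-range dependency. That suffices for ``not guaranteed.''
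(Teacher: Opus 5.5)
Your proposal is correct and follows essentially the paper's route. The next-token objective constrains only the conditionals on the length support $L\le L_{\max}$, so distinct extensions that agree there but diverge beyond $L_{\max}$ are equally optimal; this is exactly the paper's non-identifiability argument, and your explicit failing minimizer and Step~3 remarks add concreteness the paper skips by just noting that only an extrapolation-consistent architecture, positional encoding, or training procedure could rule such extensions out.
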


\begin{proof}
The next-token training objective constrains the model on the support of the
training distribution, whose sequence lengths are at most $L_{\max}$. For
serialized sequences with $L>L_{\max}$, the model is evaluated outside this
length support. Unless the architecture, positional encoding, or training
procedure imposes an extrapolation-consistent structure, there can exist multiple
extensions of the learned conditional distribution that agree on all training
lengths but behave differently beyond $L_{\max}$. Therefore, training loss on
lengths up to $L_{\max}$ alone does not identify a unique or guaranteed behavior
for longer serialized time series.
\end{proof}

Together, these results suggest that text-only LLM workflows face both
representation-level and process-level limitations for contextualized time
series. Tokenization may distort numerical geometry and place-value structure;
next-token prediction optimizes symbolic likelihood rather than numerical
forecasting or decision risk; and long serialized contexts may dilute or miss
temporally relevant evidence. These limitations motivate time-series-native
agentic workflows that operate on temporal objects through structured analysis
rather than relying solely on serialized text.

\section{Fingerprint Computation for Multimodal Memory}
\label{ap:fingerprint}

This section provides details of the fingerprint computation of the multimodal memory in \method.

\subsection{Context Embedding}
\label{app:phi}

\paragraph{Textual descriptor $c_\tau$.}
For each task $\tau = (q, \mathcal{X}, \mathcal{Y}, y^\star, \ell)$ we form a single textual descriptor $c_\tau$ by concatenating the framing
fields available in $\mathcal{C}$ that identify \emph{what the task is
asking}, rather than the raw numerical signal. The concrete field set
depends on the benchmark schema. For Context-is-Key (CiK), $c_\tau$
joins the \texttt{background}, \texttt{scenario}, and
\texttt{constraints} fields in order. For TSRBench, $c_\tau$ joins the
subtask identifier, domain, task name, the list of channel names, and
the question text. For TSAIA, $c_\tau$ joins the
\texttt{question\_type}, the prompt, the inline data description, and
either the per-task output constraint (for analysis items) or the
option list (for multiple-choice items). Empty fields are skipped, and
the resulting string is truncated to a fixed character budget (we use
$8000$ characters, well below the encoder's $8192$-token limit) so that
pathological cases such as stringified million-element arrays cannot
overflow the encoder.

\paragraph{Encoder.}
The descriptor is embedded by a frozen sentence-level encoder,
\begin{equation}
    \phi = \Phi(c_\tau) \in \mathbb{R}^{d_\phi}.
\end{equation}
We use OpenAI's \texttt{text-embedding-3-small}, which produces
$L_2$-normalized vectors with $d_\phi = 1536$. Because the outputs are
unit-norm, cosine similarity reduces to a dot product,
$\cos(\phi_q, \phi_m) = \phi_q^\top \phi_m$, which we exploit at
retrieval time. If the encoder rejects an input as too long, the
descriptor is halved and re-submitted, with at most two rounds of
halving before the call is treated as a hard failure.

\paragraph{Storage.}
At bank-building time, the embedding is computed once per record and
stored alongside the trajectory; the bank therefore supports retrieval
without re-embedding the entire history at query time. Records logged
before the embedding mechanism was introduced carry a sentinel
no-embedding mask and are simply skipped during the text retrieval
stage.

\subsection{Series Fingerprint}
\label{app:fingerprint}

\begin{table*}[t]
\centering
\small
\caption{The 20 features of the series fingerprint $\psi$, in
assembly order. ``struct'' features describe the channel pool as a
whole; ``per-ch'' features are computed per channel and aggregated by
mean (the standard deviation is additionally log-scaled);
``mv'' is the single multivariate-coupling feature.}
\label{tab:fingerprint_features}
\begin{tabular}{rllp{0.7\textwidth}}
\toprule
\# & Symbol & Type & Description \\
\midrule
1  & $\mathrm{logLen}$              & struct & $\log_{10}(\mathrm{median}_i\, n_i + 1)$; log-scaled median per-channel finite length. \\
2  & $\mathrm{logChan}$             & struct & $\log_2(N' + 1)$; log-scaled number of non-empty channels. \\
3  & $\mathrm{miss}$                & struct & Fraction of entries in $\mathbf{X}_{1:T}$ that are non-finite (NaN or $\pm\infty$). \\
4  & $\mathrm{irreg}$               & struct & Indicator that timestamps are present and unevenly spaced ($\mathrm{CV}(\Delta t) > 0.01$). \\
5  & $\mathrm{meanZ}$               & --     & Reserved slot; identically $0$ in the current implementation. \\
6  & $\mathrm{stdLog}$              & per-ch & $\log_{10}\!\big(\tfrac{1}{N'}\sum_i \sigma_i + 1\big)$; log-scaled mean per-channel standard deviation. \\
7  & $\overline{\mathrm{IQR}/\sigma}$ & per-ch & Mean per-channel IQR-to-std ratio; a robust dispersion fingerprint. \\
8  & $\overline{\mathrm{skew}}$     & per-ch & Mean per-channel skewness on the $z$-scored signal. \\
9  & $\overline{\mathrm{kurt}}$     & per-ch & Mean per-channel excess kurtosis on the $z$-scored signal. \\
10 & $\overline{\mathrm{slope}}$    & per-ch & Mean per-channel $z$-scored linear-fit slope on a unit-rescaled time axis. \\
11 & $\overline{R^2}$               & per-ch & Mean per-channel coefficient of determination of the linear-trend fit. \\
12 & $\overline{\rho(1)}$           & per-ch & Mean autocorrelation at lag $1$. \\
13 & $\overline{\rho(\ell_2)}$      & per-ch & Mean autocorrelation at lag $\lfloor\sqrt{n_i}\rfloor$. \\
14 & $\overline{\rho(\ell_3)}$      & per-ch & Mean autocorrelation at lag $\lfloor n_i/4\rfloor$. \\
15 & $\overline{\mathrm{fftFreq}}$  & per-ch & Mean dominant non-DC frequency from the per-channel rFFT, in cycles per sample. \\
16 & $\overline{\mathrm{fftPFrac}}$ & per-ch & Mean fraction of non-DC spectral power held by the dominant bin. \\
17 & $\overline{\mathrm{specEnt}}$  & per-ch & Mean normalized Shannon entropy of the non-DC spectrum (in $[0,1]$). \\
18 & $\overline{\mathrm{cp}}$       & per-ch & Mean cumulative-sum change-point rate $Z_i / n_i$. \\
19 & $\overline{|\rho|}$            & mv     & Mean absolute Pearson correlation across distinct channel pairs ($0$ if univariate). \\
20 & $\overline{\mathrm{out}}$      & per-ch & Mean per-channel fraction of points beyond $3\,\mathrm{MAD}_i$ from the channel median. \\
\bottomrule
\end{tabular}
\end{table*}

\paragraph{Setup.}
Let $\mathbf{X}_{1:T} \in \mathbb{R}^{N \times T}$ be the numerical
signal of the task and let $\mathbf{X}^{(i)} \in \mathbb{R}^{T}$ denote
its $i$-th channel. After removing non-finite entries (NaN, $\pm\infty$)
we write the finite values of channel $i$ as
$x^{(i)}_1, \ldots, x^{(i)}_{n_i}$, where $n_i \le T$. Channels with
$n_i = 0$ are dropped, and we let $N'$ be the number of remaining
channels. A small constant $\epsilon = 10^{-12}$ is added to
denominators where indicated to keep computations finite.

A per-channel feature is set to $0$ in two degenerate cases: when
$n_i < 3$ (too short for the statistic to be meaningful) and when the
channel standard deviation $\sigma_i$ vanishes (the channel is
constant). These conventions guarantee that the aggregated fingerprint
is always finite, never NaN.

\paragraph{Per-channel statistical features.}
For each channel with $n_i \ge 3$ and $\sigma_i > 0$,
\begin{align}
\mu_i        &= \tfrac{1}{n_i} \textstyle\sum_t x^{(i)}_t, \\
\sigma_i     &= \sqrt{\tfrac{1}{n_i-1} \textstyle\sum_t (x^{(i)}_t - \mu_i)^2}, \\
\mathrm{IQR}_i &= q_{0.75}(x^{(i)}) - q_{0.25}(x^{(i)}), \\
z^{(i)}_t    &= (x^{(i)}_t - \mu_i)/\sigma_i.
\end{align}
On the $z$-scored signal we read off scale-free higher moments,
\begin{align}
\mathrm{skew}_i &= \tfrac{1}{n_i} \textstyle\sum_t (z^{(i)}_t)^3, \\
\mathrm{kurt}_i &= \tfrac{1}{n_i} \textstyle\sum_t (z^{(i)}_t)^4 - 3,
\end{align}
together with a robust outlier rate via the median absolute deviation (MAD),
\begin{align}
\widetilde{x}_i &= \mathrm{median}(x^{(i)}), \\
\mathrm{MAD}_i  &= \mathrm{median}(|x^{(i)} - \widetilde{x}_i|) + \epsilon, \\
\mathrm{out}_i  &= \tfrac{1}{n_i} \big|\{t : |x^{(i)}_t - \widetilde{x}_i| > 3\,\mathrm{MAD}_i\}\big|.
\end{align}

\paragraph{Per-channel trend features.}
A linear trend is fit on a unit-rescaled time axis so that its slope is
comparable across different-length channels. With $u_t = (t-1)/(n_i-1)$
for $t = 1, \ldots, n_i$, let
\begin{equation}
    (a_i, b_i) = \arg\min_{a, b} \textstyle\sum_t \big(x^{(i)}_t - a\, u_t - b\big)^2.
\end{equation}
We record the $z$-scored slope and the linear-fit $R^2$,
\begin{align}
\mathrm{slope}_i &= a_i / (\sigma_i + \epsilon), \\
R^2_i           &= 1 - \frac{\sum_t (x^{(i)}_t - a_i u_t - b_i)^2}
                              {\sum_t (x^{(i)}_t - \mu_i)^2 + \epsilon}.
\end{align}

\paragraph{Per-channel autocorrelation features.}
Let $c^{(i)}_t = x^{(i)}_t - \mu_i$. The sample autocorrelation at lag
$\ell \in \{1, \ldots, n_i-1\}$ is
\begin{equation}
    \rho^{(i)}(\ell)
    = \frac{\sum_{t=1}^{n_i-\ell} c^{(i)}_t c^{(i)}_{t+\ell}}
           {\sum_t (c^{(i)}_t)^2 + \epsilon}.
\end{equation}
We probe three diagnostically useful lags --- a short lag, a
square-root-scale lag, and a quarter-length lag,
\begin{equation}
    \ell^{(i)}_1 = 1, \quad
    \ell^{(i)}_2 = \lfloor \sqrt{n_i} \rfloor, \quad
    \ell^{(i)}_3 = \lfloor n_i/4 \rfloor,
\end{equation}
and record the three values $\rho^{(i)}(\ell^{(i)}_k)$ for $k=1,2,3$.

\paragraph{Per-channel spectral features.}
For $n_i \ge 8$, we compute the real FFT of the centered signal,
\begin{equation}
    \widehat{S}^{(i)}_k = \big| \mathrm{rFFT}(c^{(i)})_k \big|^2,
    \quad k = 0, 1, \ldots, \lfloor n_i/2 \rfloor,
\end{equation}
with bin frequencies $f_k = k/n_i$ in cycles per sample. Let
$k^\star_i = \arg\max_{k \ge 1} \widehat{S}^{(i)}_k$ be the dominant
non-DC bin. We record the dominant frequency and the fraction of
non-DC power it carries,
\begin{align}
\mathrm{fftFreq}_i  &= f_{k^\star_i}, \\
\mathrm{fftPFrac}_i &= \frac{\widehat{S}^{(i)}_{k^\star_i}}
                            {\sum_{k \ge 1} \widehat{S}^{(i)}_k + \epsilon},
\end{align}
together with the normalized Shannon entropy of the non-DC spectrum,
\begin{align}
p^{(i)}_k          &= \widehat{S}^{(i)}_k \Big/
                       \textstyle\sum_{k' \ge 1} \widehat{S}^{(i)}_{k'}, \\
\mathrm{specEnt}_i &= -\frac{1}{\log_2 K_i}
                       \textstyle\sum_{k \ge 1}
                       p^{(i)}_k \log_2 p^{(i)}_k,
\end{align}
where $K_i = \lfloor n_i/2 \rfloor$ is the number of non-DC bins.
Normalizing by $\log_2 K_i$ keeps $\mathrm{specEnt}_i \in [0,1]$
across series of different length. When $n_i < 8$ all three spectral
features are set to $0$.

\paragraph{Per-channel change-point feature.}
Let $d^{(i)}_t = x^{(i)}_{t+1} - x^{(i)}_t$ for $t = 1, \ldots, n_i-1$,
and define the centered cumulative-sum series
\begin{equation}
    S^{(i)}_t = \textstyle\sum_{s=1}^{t}
                 \big( d^{(i)}_s - \overline{d}^{(i)} \big),
\end{equation}
where $\overline{d}^{(i)}$ is the mean of the differences. Letting
$Z_i$ be the number of sign changes in $\mathrm{sign}(S^{(i)})$, we set
\begin{equation}
    \mathrm{cp}_i = Z_i / n_i.
\end{equation}
A stable trajectory produces a CUSUM that drifts smoothly and crosses
zero rarely, whereas a regime-shifting trajectory produces many
crossings; $\mathrm{cp}_i$ is a cheap proxy for piecewise level change.

\paragraph{Structural and multivariate features.}
The remaining features describe the series as a whole rather than any
single channel. The four structural features are
\begin{align}
\mathrm{logLen}   &= \log_{10}\!\big(\mathrm{median}_i\, n_i + 1\big), \\
\mathrm{logChan}  &= \log_2(N' + 1), \\
\mathrm{miss}     &= 1 - \frac{\sum_i n_i}{N\,T}, \\
\mathrm{irreg}    &= \mathbb{1}\!\left[
                       \frac{\sigma(\Delta\mathbf{t})}{\mu(\Delta\mathbf{t})}
                       > 0.01 \right],
\end{align}
where $\Delta\mathbf{t}$ is the sequence of inter-sample time
differences when timestamps are available, and $\mathrm{irreg} = 0$
otherwise. The single multivariate coupling feature is the mean
absolute Pearson correlation across distinct channel pairs,
\begin{equation}
    \overline{|\rho|}
    = \frac{2}{N''(N''-1)} \textstyle\sum_{i<j} |\rho_{ij}|,
\end{equation}
where $N''$ is the number of non-degenerate channels (truncated to a
common length $n_{\min} = \min_i n_i$ and excluded if zero-variance),
and $\rho_{ij}$ is the Pearson correlation between channels $i$ and
$j$. When $N'' < 2$ or $n_{\min} < 3$, $\overline{|\rho|} = 0$.

\paragraph{Aggregation across channels.}
For per-channel features other than the standard deviation, we average
across the $N'$ valid channels,
$\overline{\mathrm{feature}} = \frac{1}{N'} \sum_i \mathrm{feature}_i$.
The standard deviation is averaged and then log-scaled to prevent a
single outlier channel from dominating the metric,
\begin{equation}
    \mathrm{stdLog} = \log_{10}\!\Big( \tfrac{1}{N'}
                       \textstyle\sum_i \sigma_i + 1 \Big).
\end{equation}

\paragraph{Assembly.}
The fingerprint stacks the features in the fixed order listed in
Table~\ref{tab:fingerprint_features},
\begin{equation}
    \psi = \Psi(\mathbf{X}_{1:T}) \in \mathbb{R}^{20}.
\end{equation}
Slot 5 is reserved for a future level-aware feature and is identically
zero in the current implementation; we keep the slot so that
extensions do not invalidate existing banks.

\paragraph{Stability properties.}
Two properties of $\Psi$ are worth noting. First, $\Psi$ is invariant
under permutations of the channel index: every aggregator above is
symmetric in $i$, so a bank populated under one ordering of channels
can still retrieve a query that uses a different ordering. Second, the
use of robust statistics in the outlier-rate, IQR, and median-based
features means that replacing $O(1)$ values per channel with outliers
shifts $\psi$ by $O(1/n_i)$ on those features rather than by an
unbounded amount; the fingerprint thus changes slowly under local
corruption of the input.

\section{Full Memory Construction and Retrieval}
\label{ap:multimodal_memory}

This appendix gives the complete  memory
mechanism summarized in Section~\ref{sec:memory}: the construction of
the context embedding $\phi$, the explicit specification of the series
fingerprint $\psi$, the bank-level normalization that turns the raw
fingerprint into a retrieval-ready key, and the two-stage procedure
that combines the two modalities at query time.

\subsection{Bank-Level Normalization}
\label{app:znorm}

Because the components of $\psi$ live on very different natural scales
--- $\mathrm{logLen}$ is $O(1)$ but $\overline{\mathrm{kurt}}$ can range
over orders of magnitude --- a naive $L_2$ distance would be dominated
by whichever feature happens to have the largest variance in the bank.
We therefore $z$-score each feature using the bank's own per-feature
mean and standard deviation. Let
\begin{align}
    \boldsymbol{\mu}_\psi &= \frac{1}{|\mathcal{M}|}
        \textstyle\sum_{m \in \mathcal{M}} \psi_m, \\
    \boldsymbol{\sigma}_\psi
        &= \mathrm{std}\big(\{\psi_m\}_{m \in \mathcal{M}}\big),
\end{align}
both computed coordinate-wise. Any coordinate with
$\sigma_{\psi, d} < 10^{-9}$ is treated as constant in the bank and is
excluded from the metric by setting $\sigma_{\psi, d} \gets 1$; this
collapses that coordinate to $0$ post-normalization. The
bank-normalized fingerprint is
\begin{equation}
    \tilde\psi = (\psi - \boldsymbol{\mu}_\psi) \oslash \boldsymbol{\sigma}_\psi,
\end{equation}
where $\oslash$ denotes coordinate-wise division. The scaler is
recomputed lazily on every fresh load of the bank rather than persisted,
so that bank growth is reflected immediately at the next query.

\subsection{Two-Stage Retrieval}
\label{app:retrieval}

Given a query task $\tau_q$ with keys $(\psi_q, \phi_q)$, retrieval
proceeds in two stages that combine the two modalities.

\paragraph{Stage 1 (text-side candidate pool).}
We select an intermediate pool of size $N$ by cosine similarity in the
context-embedding space,
\begin{equation}
    \mathcal{N}(\phi_q)
    = \!\!\mathop{\mathrm{Top\text{-}}N}_{m \in \mathcal{M}^\phi}\
    \phi_q^\top \phi_m,
\end{equation}
where $\mathcal{M}^\phi \subseteq \mathcal{M}$ denotes the subset of
records that carry a valid text embedding. The dot-product form is
exact because both $\phi_q$ and the stored $\phi_m$ are unit-norm. In
our experiments we use $N = 20$.

\paragraph{Stage 2 (signal-side ranking).}
Within the text-side candidate pool, we rank by $L_2$ distance in the
bank-normalized fingerprint space,
\begin{equation}
    \mathrm{Retrieve}(\tau_q;\, k)
    = \!\!\mathop{\mathrm{Top\text{-}}k}_{m \in \mathcal{N}(\phi_q)}\
    -\big\| \tilde\psi_q - \tilde\psi_m \big\|_2.
\end{equation}
$k$ is a retrieval parameter that can be adjusted according to the LLM context window and memory density.

\paragraph{Optional filters.}
Two filters can be applied in either stage. A \emph{family filter}
restricts the candidate pool to records sharing the query's task-family
tag, which is useful for diagnostic settings that isolate the
contribution of within-family retrieval. A \emph{self-exclusion filter}
removes the query's own record, which matters when the bank and the
query set overlap (e.g.\ during cross-validation). Both filters are
off by default in our main experiments.

\paragraph{Complexity.}
Stage 1 costs $O(|\mathcal{M}| \cdot d_\phi)$ in time and is dominated
by a single dense matrix-vector product. Stage 2 costs
$O(N \cdot d_\psi)$. At our scale ($|\mathcal{M}|$ in the low
thousands, $d_\phi = 1536$, $d_\psi = 20$, $N = 20$), the entire
retrieval is sub-millisecond on a CPU and does not require an
approximate-nearest-neighbor index.

%% file: tables/cik_dataset.tex
\begin{table*}[t]
\centering
\small
\caption{Domain-level composition of the CiK benchmark. CiK contains 71
context-aware temporal tasks covering seven real-world domains and a small set
of synthetic dynamical-system tasks.}
\resizebox{1.0\textwidth}{!}{
\begin{tabular}{lcl}
\toprule
Domain / Source Type & \# Tasks & Representative Data Source \\
\midrule
Public Safety & 26 & Montreal fire department intervention logs ~\citep{ville_de_montreal_2020}\\
Climatology & 12 & Solar irradiance, cloud cover, and solar power production ~\citep{sengupta2018national} \\
Transportation & 11 & PeMS traffic occupancy data ~\citep{chen2001freeway}\\
Energy & 7 & Electricity consumption data ~\citep{godahewa2021monash}\\
Retail & 6 & ATM cash withdrawal data ~\citep{godahewa2021monash}\\
Mechanics & 3 & Causal Chambers wind-tunnel measurements ~\citep{gamella2024causal} \\
Economics & 3 & State- and county-level unemployment rates ~\citep{fred_stlouisfed}\\
Synthetic Dynamics & 3 & Simulated bivariate dynamical systems ~\citep{DBLP:conf/icml/WilliamsAMZSRRL25} \\
\bottomrule
\end{tabular}}
\label{tab:cik_domain_stats}
\end{table*}

\begin{table*}[t]
\centering
\small
\caption{Context-type composition of the CiK benchmark. Context-type tags are
not mutually exclusive, so the counts do not sum to the total number of tasks.}
\resizebox{1.0\textwidth}{!}{
\begin{tabular}{lc p{0.34\textwidth} p{0.38\textwidth}}
\toprule
Context Type & \# Tasks & Meaning in CiK & Representative Example \\
\midrule
Intemporal & 51 &
Time-invariant information about the process, such as the nature of the target variable, long-period seasonal patterns, or value constraints. &
A task may describe the temporal process and target variable, or state constraints such as positivity that are not fully inferable from the observed numerical history. \\
\midrule
Historical & 25 &
Information about past behavior that is not directly reflected in the available history window. &
A public-safety task may provide aggregate past statistics, such as the yearly average number of incidents or the month with the highest incident count. \\
\midrule
Covariate & 52 &
Information about additional variables that are statistically associated with the target series. &
In the Montreal fire examples, another incident type, such as field fires, may be provided as an associated variable that helps interpret seasonal behavior. \\
\midrule
Future & 33 &
Information about future events, scenarios, or constraints that are relevant to the future temporal behavior. &
CiK includes scenarios such as an ATM being inaccessible during a future period, leading to zero withdrawals, or an incoming weather event that changes electricity demand. \\
\midrule
Causal & 14 &
Information about whether covariates causally affect, are caused by, or are confounded with the target variable. &
In the Montreal fire examples, causal context is needed to decide whether a future intervention on one incident type should affect the target incident series. \\
\bottomrule
\end{tabular}}
\label{tab:cik_context_types}
\end{table*}

%% file: tables/tsrbench_dataset.tex
\begin{table*}[t]
\centering
\small
\caption{Task taxonomy of TSRBench. The benchmark contains 15 fine-grained
tasks grouped into four temporal reasoning dimensions. Domains are reported at
a high level based on instance-level metadata and content inspection.}
\resizebox{0.96\textwidth}{!}{
\begin{tabular}{l c p{0.25\textwidth} p{0.25\textwidth} p{0.26\textwidth}}
\toprule
Dimension & \# Tasks & Fine-Grained Tasks & Domains Covered & Main Capability Tested \\
\midrule
Perception & 4 &
Pattern Recognition; Noise Understanding; Anomaly Detection; Comparative Analysis &
Various domains; patterns drawn from synthetic signals, signal processing, and statistical time series &
Recognizing temporal patterns, noise, anomalies, and relationships among
multiple series. \\
\midrule
Reasoning & 7 &
Etiological Reasoning; Causal Discovery; Abductive Reasoning; Temporal Relation
Reasoning; Numerical Reasoning; Deductive Reasoning; Inductive Reasoning &
Health; Hydrology; Physics; Political Science; Sports; Energy; Traffic;
Agriculture; Astronomy; Chemistry; Meteorology; Seismology &
Inferring causes, latent events, temporal order, quantitative relations, rules,
and higher-level temporal principles. \\
\midrule
Prediction & 2 &
Time Series Forecasting; Event Prediction &
Finance; Climate &
Predicting future numerical values or future discrete events from temporal
observations and contextual information. \\
\midrule
Decision-Making & 2 &
Qualitative Decision-Making; Quantitative Decision-Making &
Finance; Health &
Selecting decisions or actions by combining temporal patterns, contextual
knowledge, rules, and quantitative outcome comparison. \\
\bottomrule
\end{tabular}}
\label{tab:tsrbench_taxonomy}
\end{table*}

\begin{table*}[t]
\caption{Subtask-level instance distribution in TSRBench and our evaluation
split. We use a stratified 20\% subset, selecting approximately one fifth of
the instances from each subtask.}
\centering
\small
\resizebox{0.7\textwidth}{!}{
\begin{tabular}{llcc}
\toprule
Dimension & Subtask & Full Instances & Our 20\% Split \\
\midrule
Perception & Pattern Recognition & 371 & 74 \\
Perception & Anomaly Detection & 129 & 26 \\
Perception & Similarity Analysis & 113 & 23 \\
Perception & Noise Understanding & 87 & 17 \\
\midrule
Prediction & Event Prediction & 360 & 72 \\
Prediction & Time Series Forecasting & 720 & 144 \\
\midrule
Decision-Making & Qualitative Decision-Making & 335 & 67 \\
Decision-Making & Quantitative Decision-Making & 300 & 60 \\
\midrule
Reasoning & Abductive Reasoning & 150 & 30 \\
Reasoning & Causal Reasoning & 300 & 60 \\
Reasoning & Deductive Reasoning & 250 & 50 \\
Reasoning & Etiological Reasoning & 350 & 70 \\
Reasoning & Inductive Reasoning & 100 & 20 \\
Reasoning & Numerical Reasoning & 400 & 80 \\
Reasoning & Temporal Relation Reasoning & 160 & 32 \\
\midrule
\multicolumn{2}{l}{Total} & 4,125 & 825 \\
\bottomrule
\end{tabular}}
\label{tab:tsrbench_split_stats}
\end{table*}

%% file: tables/hyperparameter.tex
\begin{table*}[t]
\centering
\small
\caption{Hyperparameters of \method.}
\label{tab:hparams}
\begin{tabular}{lll}
\toprule
\textbf{Category} & \textbf{Hyperparameter} & \textbf{Default Value} \\
\midrule
Backbone
& LLM policy $\pi_\theta$              & \texttt{gpt-5-nano} \\
\midrule
\multirow{4}{*}{Train/test split}
& Train ratio                          & $0.5$ \\
& Min train records per family         & $1$ \\
& Split seed                           & $2026$ \\
& Subsample seed                       & $42$ \\
\midrule
\multirow{5}{*}{Multimodal memory}
& Fingerprint dim $d_\psi$             & $20$ \\
& Encoder $\Phi$                       & \texttt{text-embedding-3-small} \\
& Embedding dim $d_\phi$               & $1536$ \\
& Context descriptor char cap          & $\le 8000$ \\
& Bank std floor ($z$-score)           & $10^{-9}$ \\
\midrule
\multirow{2}{*}{Two-stage retrieval}
& Text-stage pool size $N$             & $20$ \\
& Top-$k$ injected                     & $3$ \\
\midrule
Capability evolution
& Verification threshold $\gamma$      & $0.7$ \\
\bottomrule
\end{tabular}
\end{table*}

%% file: tables/cik_domain.tex
\begin{table*}[t]
\caption{Domain-wise RCRPS results on the CiK benchmark. Lower values indicate better performance. The best and second-best results in each column are highlighted in \textbf{bold} and \underline{underlined}, respectively. \method achieves the best average RCRPS with a moderate token budget across all domains, ranks first in 5 out of 8 domains, and attains the smallest average rank of 2.625.}
\label{tab:cik_resuls_domain}
\centering
\resizebox{\textwidth}{!}{%
\begin{tabular}{lccccccccc}
\toprule
 \multirow{2}{*}{\makecell{Method}} 
 & \multirow{2}{*}{\makecell{Average \\ RCRPS ($\downarrow$)}} 
 & \multicolumn{8}{c}{RCRPS by Domain ($\downarrow$)} \\
\cmidrule(lr){3-10}
 & & Climatology & Economics & Energy & Mechanics & \makecell{Public Safety} & Retail & Synthetic & Traffic \\
\midrule
\multicolumn{10}{l}{\textit{Traditional Time Series Models}} \\
~~~ARIMA \citeyearpar{broomhead1989time} & 0.2772 & 0.6717 & 0.3198 & 0.0964 & 0.7748 & 0.0938 & 0.2726 & 0.2817 & 0.2494 \\
~~~ETS \citeyearpar{jain2017study} & 0.3282 & 0.8028 & 0.2837 & 0.0971 & 0.8036 & 0.1528 & 0.3054 & 0.3011 & 0.2740 \\
~~~DLinear \citeyearpar{DBLP:conf/aaai/ZengCZ023} & 2.3448 & 1.9885 & 1.5496 & 0.1177 & 0.5057 & 5.0000 & 0.3214 & 0.3448 & 0.2425 \\
~~~PatchTST \citeyearpar{DBLP:conf/iclr/NieNSK23} & 2.3374 & 1.9903 & 1.3772 & 0.1107 & 0.5789 & 5.0000 & 0.2853 & 0.3319 & 0.2474 \\
\midrule
\multicolumn{10}{l}{\textit{Time Series Foundation Models}} \\
~~~Chronos-2 \citeyearpar{DBLP:journals/corr/abs-2510-15821} & 0.2344 & 0.1640 & 0.3439 & 0.0977 & 0.7142 & 0.1880 & 0.2990 & 0.3048 & 0.2925 \\
~~~Chronos-1-Large \citeyearpar{DBLP:journals/tmlr/AnsariSTZMSSRPK24} & 0.3362 & 0.3046 & \underline{0.2279} & 0.0610 & 0.8840 & 0.1077 & 0.2178 & 0.3450 & 1.0284 \\
~~~Chronos-1-Tiny \citeyearpar{DBLP:journals/tmlr/AnsariSTZMSSRPK24} & 0.3380 & 0.2709 & 0.2931 & 0.0609 & 0.9166 & 0.1054 & 0.2397 & 0.3538 & 1.0411 \\
~~~Chronos-1-Small \citeyearpar{DBLP:journals/tmlr/AnsariSTZMSSRPK24} & 0.3474 & 0.2796 & 0.2427 & 0.0611 & 0.8412 & 0.1155 & 0.2248 & 0.3479 & 1.1126 \\
~~~Chronos-1-Mini \citeyearpar{DBLP:journals/tmlr/AnsariSTZMSSRPK24} & 0.3640 & 0.2837 & 0.2841 & 0.0610 & 0.9219 & 0.1105 & 0.2418 & 0.3513 & 1.1834 \\
~~~Lag-Llama \citeyearpar{rasul2023lag} & 0.2628 & 0.4049 & 0.4529 & 0.0615 & 0.8911 & 0.1190 & 0.2400 & 0.3414 & 0.3435 \\
~~~Moirai-Large \citeyearpar{DBLP:conf/icml/WooLKXSS24} & 0.4545 & 1.0599 & 0.2843 & 0.0613 & 0.7277 & 0.0986 & 0.2347 & 0.3233 & 1.0133 \\
\midrule
\multicolumn{10}{l}{\textit{LLM/Agent on Time Series}} \\
~~~UniTime \citeyearpar{DBLP:conf/www/LiuHLDLHZ24} & 0.2822 & 0.6594 & 0.4928 & 0.0671 & 0.6988 & \underline{0.0931} & 0.2794 & 0.3512 & 0.2660 \\
~~~Time-LLM \citeyearpar{DBLP:conf/iclr/0005WMCZSCLLPW24} & 0.3990 & 0.8478 & 0.5408 & 0.0678 & 0.8344 & 0.1163 & 0.3276 & 0.3625 & 0.6796 \\
~~~Time-LLM w/o context \citeyearpar{DBLP:conf/iclr/0005WMCZSCLLPW24} & 0.3912 & 0.7677 & 0.5938 & 0.0672 & 0.8014 & 0.1045 & 0.3205 & 0.3622 & 0.7438 \\
~~~Llama3-70B \citeyearpar{DBLP:journals/corr/abs-2407-21783} & 0.2449 & 0.2958 & 0.2437 & 0.0611 & 0.8900 & 0.1034 & 0.1977 & 0.3583 & 0.4599 \\
\midrule
\multicolumn{10}{l}{\textit{General Agentic Pipelines}} \\
~~~TS-Agent \citeyearpar{DBLP:journals/corr/abs-2510-07432}            & 0.1421 & 0.1350 & 0.4037 & 0.0707 & 0.5449 & 0.0956 & 0.1747 & 0.1403 & 0.1070 \\
~~~TSci \citeyearpar{DBLP:journals/corr/abs-2510-01538}                & 0.1448 & \underline{0.1056} & 0.3845 & 0.0190 & 0.7685 & 0.1086 & 0.1643 & \underline{0.0878} & 0.1224 \\
~~~CoT Prompt \citeyearpar{DBLP:conf/nips/Wei0SBIXCLZ22}          & 0.1726 & 0.1282 & 0.7298 & 0.0164 & 0.9519 & 0.1248 & \textbf{0.1332} & \textbf{0.0849} & 0.1142 \\
~~~ReAct \citeyearpar{DBLP:conf/iclr/YaoZYDSN023}               & 0.1514 & 0.1078 & 0.4218 & \underline{0.0144} & 0.7077 & 0.1464 & 0.1396 & 0.1002 & 0.0929 \\
~~~Self-Reflection \citeyearpar{DBLP:journals/corr/abs-2405-06682}     & 0.1608 & 0.1129 & 0.3011 & 0.0195 & 1.1898 & 0.1255 & \underline{0.1379} & 0.0946 & 0.0983 \\
~~~Multi-agent Refine \citeyearpar{DBLP:conf/icml/YuanX25}  & \underline{0.1294} & \textbf{0.1040} & 0.4221 & 0.0320 & \underline{0.5414} & 0.1068 & 0.1456 & 0.1220 & \underline{0.0736} \\

\midrule
\method (Ours) & \textbf{0.1145} & 0.1083 & \textbf{0.1571} & \textbf{0.0103} & \textbf{0.4203} & \textbf{0.0888} & 0.1713 & 0.1304 & \textbf{0.0730} \\
\bottomrule
\end{tabular}
}
\vspace*{-0.2cm}
\end{table*}

%% file: tables/tsr_ap_open.tex
\renewcommand{\arraystretch}{1.2}
\begin{table*}[t]
\caption{
Fine-grained TSRBench accuracy (\%) of base models that are dominated by TimeClaw across all four task categories. Per-subtask numbers are taken from the original TSRBench paper~\citep{DBLP:journals/corr/abs-2601-18744}.  \textbf{Bold} marks the best score in each column; \underline{underline} marks the second-best. By harnessing GPT-5-nano, \method achieves the best performance, outperforming large open-source models with up to 235B parameters.
}
\centering
\scriptsize
\resizebox{1.0\textwidth}{!}{
\setlength{\tabcolsep}{4pt}
\begin{tabular}{l|cccc|ccccccc|cc|cc|c}
\toprule
\multicolumn{1}{l|}{\textbf{Model}}
  & \multicolumn{4}{c|}{\textbf{Perception}}
  & \multicolumn{7}{c|}{\textbf{Reasoning}}
  & \multicolumn{2}{c|}{\textbf{Prediction}}
  & \multicolumn{2}{c|}{\textbf{Decision}}
  & \multicolumn{1}{c}{\textbf{Overall}} \\
\midrule
 & PR & NU & AD & CA & ER & CD & AR & TR & NR & DR & IR & TSF & EP & QualDM & QuantDM & \\
\midrule
\multicolumn{17}{c}{\textit{Textual Time Series as Input}} \\
\midrule
Qwen2.5-3B               & 46.4 & 51.7 & 33.3 & 51.3 & 20.0 & 25.0 & 38.0 & 21.2 & 34.8 & 29.2 & 19.0 & 31.4 & 58.3 & 22.7 & 24.0 & 33.2 \\
Qwen2.5-7B               & 50.7 & 50.6 & 41.1 & 54.9 & 15.1 & 32.3 & 46.0 & 28.1 & 33.2 & 24.0 & 28.0 & 33.5 & 37.5 & 31.0 & 25.7 & 33.7 \\
Qwen2.5-72B              & 55.3 & 55.2 & 47.3 & 66.4 & 20.9 & \underline{58.0} & 62.0 & \underline{33.8} & 38.2 & 36.8 & \underline{40.0} & 30.7 & 70.3 & 34.0 & 30.3 & 42.4 \\
Qwen3-1.7B               & 45.8 & 59.8 & 38.0 & 51.3 & 18.3 & 30.1 & 48.7 & 30.0 & 27.0 & 28.4 & 20.0 & 39.2 & 67.8 & 31.3 & 24.7 & 36.8 \\
Qwen3-8B                 & 51.8 & 56.3 & 48.8 & 56.6 & 14.0 & 29.3 & 57.3 & 23.1 & 36.5 & 23.6 & 33.0 & \underline{46.7} & 48.9 & \underline{34.9} & 28.7 & 38.3 \\
Qwen3-235B-A22B          & 66.0 & 56.3 & \underline{59.7} & 67.3 & 22.6 & 34.7 & \textbf{86.7} & 28.1 & 44.8 & \textbf{49.2} & 39.0 & 29.0 & 48.9 & 34.8 & 30.0 & 42.2 \\
Gemma3-12B-it            & 51.5 & 59.8 & 48.9 & 65.5 & 21.1 & 34.7 & 42.0 & 23.7 & 33.2 & 28.0 & \textbf{41.0} & 39.4 & 34.2 & 34.3 & 31.3 & 37.2 \\
Gemma3-27B-it            & 56.1 & 59.8 & 48.1 & 68.1 & 22.3 & 37.0 & \underline{66.7} & 21.9 & 36.0 & 29.3 & 35.0 & 36.1 & 42.8 & 33.1 & 30.3 & 38.6 \\
\midrule
\multicolumn{17}{c}{\textit{Visual Time Series as Input}} \\
\midrule
Phi4-Multimodal-8B       & 52.3 & 46.0 & 41.9 & 48.7 & 24.6 & 22.3 & 28.7 & 23.1 & 30.5 & 25.2 & 26.0 & 32.1 & 25.8 & \underline{34.9} & 29.0 & 31.9 \\
InternVL3.5-1B           & 47.2 & 49.4 & 38.8 & 46.9 & 25.7 & 24.3 & 46.7 & 24.4 & 29.2 & 24.0 & 27.0 & 34.7 & 46.4 & 34.3 & 21.3 & 33.8 \\
InternVL3.5-8B           & 60.9 & 55.2 & 52.7 & 64.6 & 25.7 & 38.3 & 60.0 & 27.5 & 40.5 & 31.6 & 29.0 & 38.2 & 41.9 & 33.4 & 22.3 & 39.5 \\
InternVL3.5-38B          & 58.8 & 60.9 & 55.8 & 69.9 & 30.9 & 43.0 & 52.7 & 31.2 & 43.8 & 30.4 & 34.0 & 32.2 & 32.5 & \textbf{35.2} & 29.0 & 39.4 \\
MiniCPM-V-4.5-8B         & 63.6 & 56.3 & 56.6 & 62.8 & 22.3 & 24.3 & 54.7 & 20.6 & 26.5 & 22.8 & 24.0 & 44.6 & 27.8 & 26.3 & 29.7 & 35.9 \\
MiMo-VL-7B-RL            & 58.2 & \underline{64.4} & 52.7 & 65.5 & 23.7 & 34.7 & 65.3 & 30.6 & 33.0 & 25.6 & 36.0 & 35.0 & 29.7 & 29.0 & 33.0 & 37.2 \\
\midrule
\multicolumn{17}{c}{\textit{Embedded Time Series as Input}} \\
\midrule
OpenTSLM-3B      & 39.9 & 40.2 & 35.7 & 43.4 & 20.3 & 32.3 & 33.3 & 28.1 & 24.5 & 26.0 & 32.0 & 32.5 & 35.6 & 28.4 & 28.7 & 31.0 \\
ChatTS-14B               & 50.7 & 50.6 & 46.5 & 55.8 & 21.7 & 34.3 & 51.3 & 24.4 & 30.5 & 23.6 & 25.0 & 19.2 & 52.2 & 29.3 & 27.7 & 33.5 \\
TS-Reasoner-7B           & 53.1 & 56.3 & 48.1 & 50.4 & 28.9 & 24.3 & 57.3 & 26.9 & 37.2 & 23.6 & 24.0 & 31.7 & 55.0 & 32.8 & 21.7 & 36.4 \\
TimeOmni-1-7B            & 55.0 & 59.8 & 41.9 & 64.6 & 28.0 & 35.3 & 46.7 & 24.4 & 31.5 & 22.8 & 34.0 & 30.0 & 49.4 & 34.0 & 30.3 & 36.7 \\
\midrule
TimeClaw (GPT-5-nano)    & \underline{76.9} & 22.2 & 45.5 & \underline{72.7} & \underline{45.7} & 40.0 & 33.3 & 25.0 & \underline{72.5} & 32.0 & 10.0 & \textbf{47.2} & \underline{83.3} & 21.2 & \textbf{46.7} & \underline{49.8} \\
TimeClaw (GPT-5-mini)    & \textbf{84.6} & \textbf{66.7} & \textbf{81.8} & \textbf{81.8} & \textbf{48.6} & \textbf{76.7} & 53.3 & \textbf{50.0} & \textbf{80.0} & \underline{44.0} & \underline{40.0} & 38.9 & \textbf{86.1} & 21.2 & \underline{33.3} & \textbf{57.3} \\
\bottomrule
\end{tabular}
}
\label{tab:tsrbench_ap_open}
\end{table*}
\renewcommand{\arraystretch}{1.0}

%% file: 1_appendix/ap_baselines.tex
\section{Baseline Details and References}
\label{ap:baselines}

Due to the broad range of baselines considered in our experiments, we provide additional details in this appendix, including how each baseline is adapted to our evaluation settings when necessary. For complete methodological descriptions, we refer readers to the corresponding original publications.

\paragraph{Traditional Time Series Models.}
This group includes classical statistical models and neural models that operate directly on numerical time-series histories. These methods serve as non-agentic baselines for evaluating how far standard temporal modeling can go without explicit contextual reasoning or agentic workflow support.

\begin{itemize}
   \item \textbf{ARIMA}.
    ARIMA is a classical statistical model that captures temporal dependence through autoregressive, differencing, and moving-average components.

    \item \textbf{ETS}.
    ETS uses exponential smoothing to model level, trend, and seasonal
    components. Like ARIMA, it relies on historical numerical patterns rather than natural-language context or agentic reasoning.

    \item \textbf{DLinear}~\citep{DBLP:conf/aaai/ZengCZ023}.
    DLinear decomposes a time series into trend and remainder components using a moving average, applies separate one-layer linear mappings to them, and sums the outputs for prediction.

    \item \textbf{PatchTST}~\citep{DBLP:conf/iclr/NieNSK23}.
    PatchTST is a Transformer-based model for long-term time-series modeling. It segments each time series into subseries-level patches and treats these patches as input tokens to the Transformer. It also uses a channel-independent design, where different variables share the same embedding and Transformer weights.
\end{itemize}

\paragraph{Time Series Foundation Models.}
This group contains pretrained foundation models designed for broad
time-series modeling across datasets and domains. These baselines test whether specialized temporal pretraining alone can match a harnessed agent that uses temporal tools, contextual reasoning, reusable routines, and memory.

\begin{itemize}
    \item \textbf{Chronos-family models}~\citep{DBLP:journals/tmlr/AnsariSTZMSSRPK24, DBLP:journals/corr/abs-2510-15821}.
    Chronos formulates time-series forecasting as language modeling over discrete value tokens: continuous observations are scaled, quantized into a fixed vocabulary, and modeled with Transformer language-model architectures. Through pretraining on diverse real and synthetic time series, Chronos-family models provide strong zero-shot probabilistic forecasting baselines.

    \item \textbf{Lag-Llama}~\citep{rasul2023lag}.
    Lag-Llama is a pretrained foundation model for univariate probabilistic time-series forecasting. It uses a decoder-only Transformer architecture with lagged values as covariates and is pretrained on diverse time-series datasets to support zero-shot and fine-tuned prediction. 
    \item \textbf{Moirai-family models}~\citep{DBLP:conf/icml/WooLKXSS24}.
    Moirai is a universal time-series forecasting Transformer trained for cross-domain zero-shot prediction. It is designed to handle heterogeneous temporal data, including different sampling frequencies, variable counts, and distributional properties, through a masked encoder architecture.
\end{itemize}

\paragraph{Language Models.}
This group includes general-purpose language models and time-series-oriented language-model baselines. These methods help evaluate whether language modeling or language-based adaptation alone is sufficient for contextualized temporal reasoning, without the full time-series-native harness used by \method. In our experiments, we compared with a broader set of such general-purpose language models show improved performance over them. We provide details of representative models below. We refer the readers to the original benchmark papers \cite{DBLP:conf/icml/WilliamsAMZSRRL25, DBLP:journals/corr/abs-2601-18744} for more details.

\begin{itemize}
    \item \textbf{LLaMA3-70B}~\citep{DBLP:journals/corr/abs-2407-21783}.
    LLaMA3-70B is a strong open-weight general-purpose LLM from the Llama 3 family. For CiK, we use it as a text-only language-model baseline: the numerical time series and natural-language context are serialized into a single prompt, and the model generates the continuation autoregressively through next-token prediction. Under greedy decoding, each generated token is selected according to the model's output logits.

    \item \textbf{UniTime}~\citep{DBLP:conf/www/LiuHLDLHZ24}.
    UniTime is a language-empowered unified model for cross-domain time-series forecasting. It uses domain instructions to provide domain-level information and a Language-TS Transformer to align language and time-series representations. The model aims to handle heterogeneous time series from different domains under
    a unified architecture.

    \item \textbf{Time-LLM}~\citep{DBLP:conf/iclr/0005WMCZSCLLPW24}.
    Time-LLM reprograms pretrained LLMs for time-series forecasting while keeping the backbone language model frozen. It maps time-series patches into text-prototype representations aligned with the LLM input space, augments the input with Prompt-as-Prefix task information, and projects the LLM outputs into temporal predictions.
\end{itemize}

\paragraph{Agentic Frameworks.}
This group includes general prompting and agentic workflows that are not specifically designed for time series. They test whether generic LLM reasoning, tool-use, or reflection pipelines are sufficient for contextualized temporal tasks without specialized temporal runtime support.

\begin{itemize}
    \item \textbf{Direct prompting.}
    Direct prompting feeds the serialized time series, context, and task instruction to the language model in a single prompt. The model directly produces the final answer without explicit reasoning steps, tool use, reflection, or multi-agent interaction.

    \item \textbf{Chain-of-thought prompting}~\citep{DBLP:conf/nips/Wei0SBIXCLZ22}.
    Chain-of-thought prompting asks the language model to produce intermediate natural-language reasoning steps before the final answer. It is a prompting-only strategy for eliciting multi-step reasoning from large language models.

    \item \textbf{ReAct}~\citep{DBLP:conf/iclr/YaoZYDSN023}. 
    ReAct interleaves natural-language reasoning traces with task-specific actions. The reasoning steps help the model maintain and update its plan, while actions allow it to interact with external tools or environments. It is a general agentic prompting baseline for combining reasoning and tool use.

    \item \textbf{Self-reflection}~\citep{DBLP:journals/corr/abs-2405-06682}.
    Self-reflection lets an LLM inspect its previous answer or reasoning process, identify possible errors, and use the generated feedback to revise its response. The authors study several reflection variants where agents learn guidance from prior mistakes before attempting the task again.

    \item \textbf{Multi-agent reflection}~\citep{DBLP:conf/icml/YuanX25}.
    Multi-agent reflection uses multiple LLM agents to iteratively verify and
improve answers. It models the refinement process as a Markov Decision Process and trains an actor-critic LLM system to incorporate feedback over multiple rounds, improving reasoning through coordinated agent collaboration.
\end{itemize}

\paragraph{Time Series Agents.}
This group includes agentic methods specifically designed for time-series
analysis. These baselines are closest to our setting, but they typically focus
on particular temporal tasks or fixed agentic workflows rather than providing a
general harness for contextualized temporal reasoning.

\begin{itemize}
    \item \textbf{TS-Agent}~\citep{DBLP:journals/corr/abs-2510-07432}.
    TS-Agent is a time-series reasoning agent that lets LLMs synthesize conclusions from evidence while delegating statistical and structural extraction to tools over raw numerical sequences. It maintains an explicit evidence log and uses a self-critic with a final quality gate to iteratively refine reasoning.

    \item \textbf{TSci}~\citep{DBLP:journals/corr/abs-2510-01538}.
    TSci is an LLM-driven agentic framework for general time-series forecasting. It organizes the workflow into specialized agents for data diagnostics, preprocessing, model planning, fitting, validation, ensembling, and reporting. The framework emphasizes automated forecasting together with transparent natural-language rationales and reports.
\end{itemize}

%% file: 1_appendix/case_study.tex
\section{Case Study}
\label{ap:case_study}

\addtocontents{toc}{\protect\setcounter{tocdepth}{1}}

\begin{figure*}[t]
\centering
\includegraphics[width=\linewidth]{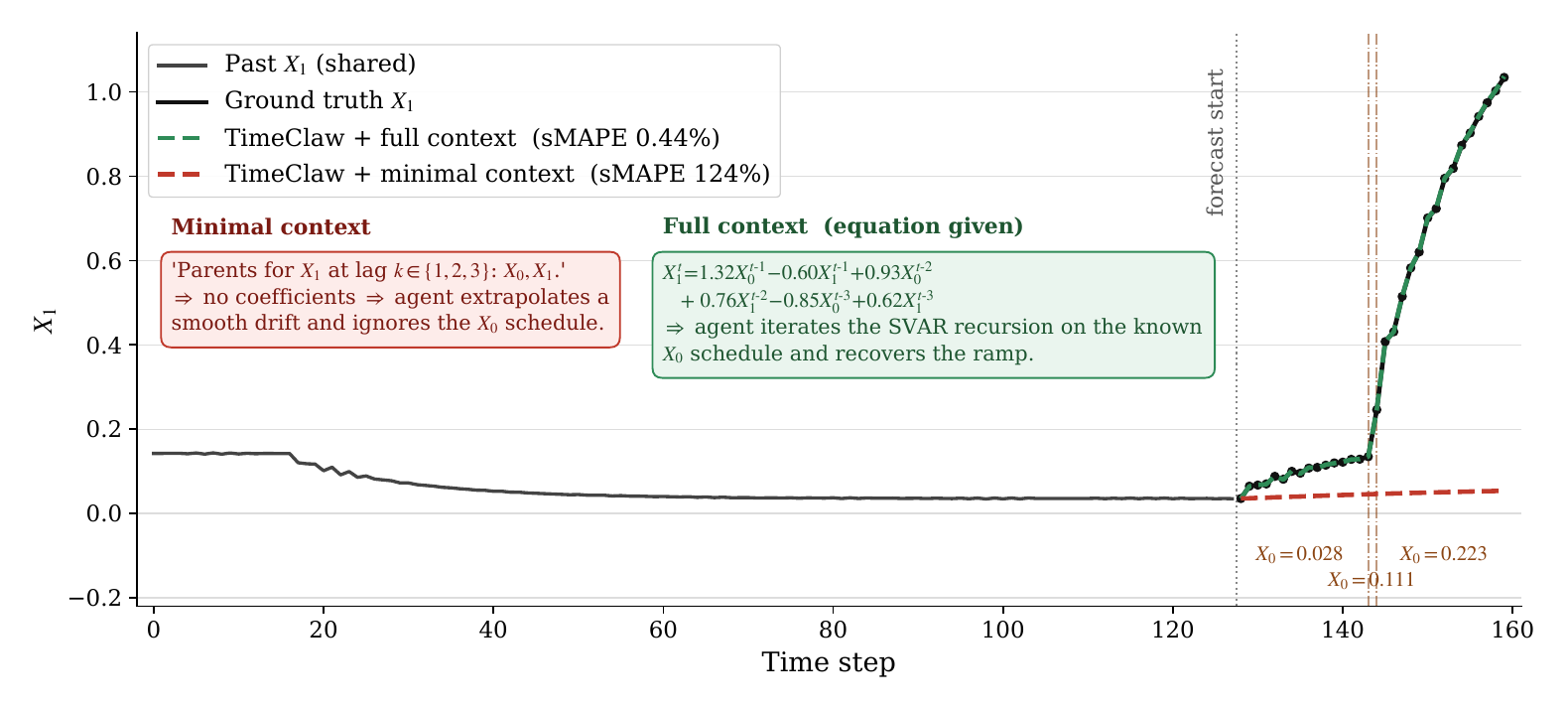}
\caption{Case study on a paired SVAR task from Context-is-Key \citep{DBLP:conf/icml/WilliamsAMZSRRL25}. Given identical history and ground truth, \method{} recovers the ramp when the prompt contains the SVAR equation and misses it entirely when only the qualitative parent list is provided. This demonstrates that \method{} actively grounds its forecast in the textual context, rather than extrapolating from the numerical history alone.}
\label{fig:cik_case}
\vspace{-3mm}
\end{figure*}

\subsection{How \method Leverages Context?}

To test how \method{} leverages textual context rather than
pattern-matching on numerical history, we pair two SVAR tasks at the same
seed: \textsc{MinimalCausalContextBivarLinSVAR} and
\textsc{FullCausalContextExplicitEquationBivarLinSVAR}. The two records
have byte-identical 128-step past history of $X_1$, identical 32-step
ground truth, and the same step-change schedule for the covariate $X_0$
($0.028\!\to\!0.111\!\to\!0.223$). The Minimal prompt only lists the
qualitative parents of $X_1$; the Full prompt additionally spells out the
linear SVAR equation with all six coefficients. Any difference in
forecast quality must therefore route through the agent's reading of
roughly $400$ extra tokens of context.

Figure~\ref{fig:cik_case_study_svar} overlays both forecasts on the
shared ground truth. Given the equation, the agent inspects recent $X_1$
values through the analysis tools and unrolls the SVAR recursion on the
known future $X_0$ schedule, recovering the ramp from $0.035$ to $1.03$
within $\sim\!10^{-3}$ (sMAPE $\mathbf{0.44\%}$). Without the
coefficients it cannot translate the $X_0$ jumps into $X_1$ movements
and falls back to a smooth drift inside $[0.036, 0.054]$, missing the
ramp entirely (sMAPE $\mathbf{124\%}$). The $\sim$280 times gap on
identical numerical inputs confirms that, when context carries
decision-relevant structure, \method{} propagates that structure into
the forecast rather than relying on the history alone.

\begin{figure*}[h]
\centering
\begin{subfigure}[t]{0.32\textwidth}
  \includegraphics[width=\linewidth]{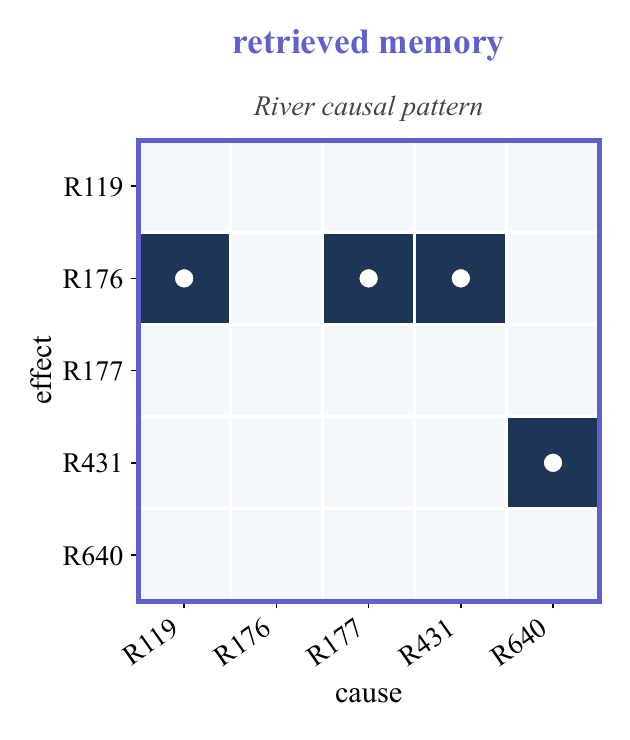}
\end{subfigure}\hfill
\begin{subfigure}[t]{0.32\textwidth}
  \includegraphics[width=\linewidth]{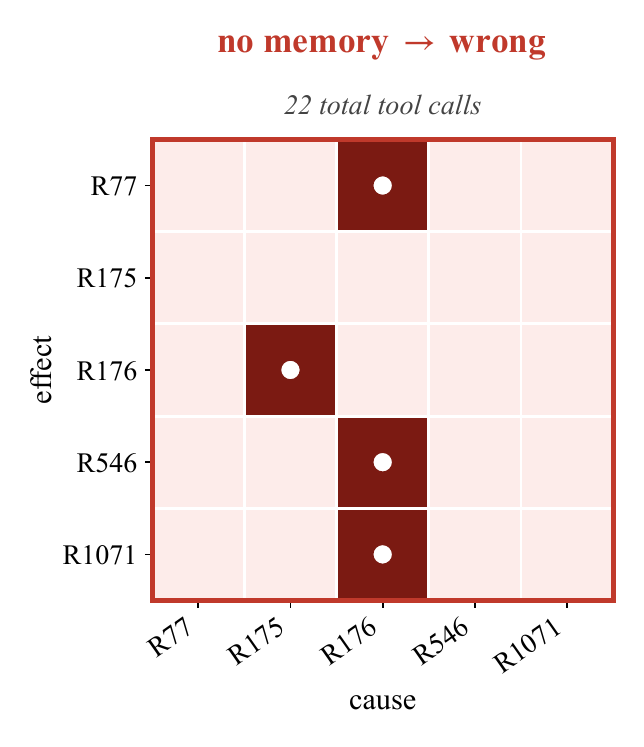}
\end{subfigure}\hfill
\begin{subfigure}[t]{0.32\textwidth}
  \includegraphics[width=\linewidth]{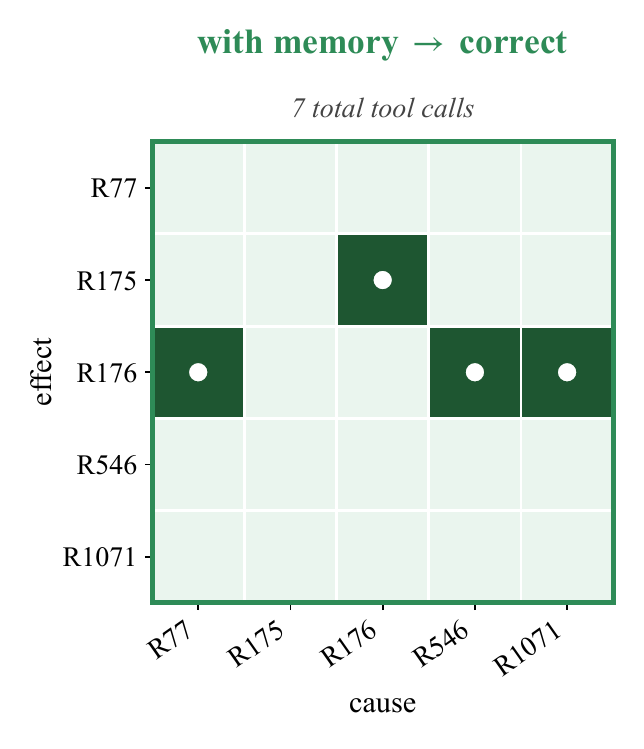}
\end{subfigure}
\caption{
Causal maps of the case study on memory transfer in TSRBench river flood causal reasoning (R stands for "River"). 
\textbf{Left:} a retrieved memory summarizes an upstream-to-downstream cascade from a related but different river system, providing a transferable structural rule. 
\textbf{Middle:} without memory, the agent performs broad exploratory analysis and commits to a physically reversed adjacency after $22$ tool calls. 
\textbf{Right:} with memory, the agent reuses the retrieved upstream-to-downstream reasoning pattern, executes only $7$ targeted tool calls, and selects the correct causal structure. 
This illustrates that episodic memory improves reasoning by transferring compact task-level rules across related temporal scenarios, rather than by matching numerical histories alone.
}
\label{fig:tsrbench_case_study_causal}
\end{figure*}

\subsection{How Task Memory Steers Agent Decisions}

To examine how retrieved task memory changes agent behavior, we trace
\method{} on a representative TSRBench causal-reasoning instance. The task asks
the agent to infer the causal adjacency among multiple river sensors from the
same underlying time series. We run the task under two settings with the same
agent, MCP tool budget, and numerical input: retrieval disabled ($k\!=\!0$) and
top-$3$ memory retrieval enabled ($k\!=\!3$). The retrieved memories come from
the training bank and do not overlap with the test split.

Figure~\ref{fig:tsrbench_case_study_causal} contrasts the two outcomes. 
Without memory, the agent performs a broad numerical sweep, issuing $22$ tool
calls across \texttt{channel\_stats}, \texttt{compute\_acf},
\texttt{find\_peaks}, and \texttt{detect\_periodicity} for all channels.
Despite this exhaustive inspection, it selects a physically reversed causal
structure, mistaking downstream effects for upstream causes. 

With memory, retrieval returns related causal-reasoning trajectories from
sibling tasks. One retrieved memory summarizes a transferable rule: the relevant
mechanism follows an upstream-to-downstream cascade. Guided by this rule, the
agent avoids an exhaustive scan, runs only $7$ targeted \texttt{compute\_acf}
probes to verify the lead-lag pattern, and selects the correct adjacency. 
Importantly, the retrieved memory provides no additional observations from the
test instance; it transfers a compact decision rule learned from a different
river system. Thus, memory changes both the answer and the reasoning process,
reducing the tool budget by over $3\times$ while steering the agent toward the
correct causal direction. A step-by-step trace is provided in
Figure~\ref{fig:tsrbench_case_study_causal_box}.

\subsection{How \method{} Evolves New Tools from Experience?}

We further examine how \method{} expands its toolset through experience-driven capability evolution. 
Tool evolution converts recurring successful trajectories into reusable executable routines. 
This mechanism is especially useful when the initial toolbox only provides generic time-series operations, while a benchmark repeatedly requires domain-specific computations that are cumbersome or unreliable to reconstruct from low-level tools. 
In such cases, \method{} summarizes successful training trajectories, identifies repeated analytical routines, and promotes them into callable tools with standardized argument schemas. 
The evolved tools can then be reused at test time, allowing the agent to bypass brittle manual arithmetic and directly execute the intended computation.

Figure~\ref{fig:tsaia_case_study_finance_tools} illustrates this process on TSAIA's finance split. 
Starting from a generic time-series toolbox, the agent observes recurring successful trajectories involving portfolio risk-adjusted return estimation, portfolio risk estimation, and market-factor regression. 
From these trajectories, \method{} evolves three finance-specific tools: \texttt{portfolio\_sharpe}, \texttt{portfolio\_var}, and \texttt{capm\_regression}. 
At test time, these evolved tools provide direct executable support for financial reasoning, improving both accuracy and tool-use efficiency. 
This case study shows that capability evolution transfers reusable computational procedures, rather than task-specific data, and enables the agent to acquire new domain-specific skills from prior experience.

\section{Prompt Templates}
\label{app:prompts}

\method instantiates a small family of prompts that share the same high-level
skeleton but specialize the input rendering, the output schema, and the
tool-use guidance to the underlying benchmark. All test-time prompts can
optionally be wrapped with a retrieval prefix that injects top-$k$ trajectory
summaries from the episodic memory bank, and all training-time prompts are
extended with a ground-truth-aware suffix that elicits a transferable
\texttt{<context\_to\_action>} explanation block. We describe the unified
skeleton in Figure~\ref{fig:prompt_template_general}, and an example per-benchmark
instantiations in Figure~\ref{fig:prompt_tsrbench}. We further describe the memory bank prompt in Figure ~\ref{fig:prompt_augments}.

\clearpage

\begin{figure*}[h]
\centering
\resizebox{\textwidth}{!}{
\begin{tcolorbox}[colback=gray!5!white, colframe=pink!120,
title=Case Study C.1: \method{} on Context-is-Key: How Textual Context Steers the Forecast,
boxrule=0.3mm, width=1.2\textwidth, arc=2mm, auto outer arc=true]

\textbf{Core Task.}
We pick a pair of CiK tasks from the bivariate linear SVAR family at the
same seed (\texttt{seed=5}): \textsc{MinimalCausalContextBivarLinSVAR} and
\textsc{FullCausalContextExplicitEquationBivarLinSVAR}. The two records
share \emph{identical} 128-step past history and 32-step ground-truth
future of the target variable $X_1$, and share the same step-change
schedule for the covariate $X_0$ over the forecast window
($0.028 \!\to\! 0.111 \!\to\! 0.223$). The \emph{only} difference is what
the textual context reveals about the data-generating process. This
isolates a clean variable: context $\to$ forecast.

\vspace{2mm}

\textbf{Prompt:} See Appendix~\ref{app:prompts} for the full prompt template.

\vspace{2mm}

\textbf{Visualization:} We provide Figure \ref{fig:cik_case} to better illustrate this case study.

\vspace{3mm}
\hrule
\vspace{3mm}

\textbf{Memory Retrieval Stage.}
\begin{enumerate}
    \item \textbf{Two-stage retrieval.}
    \method{} embeds the task's
    \texttt{background + scenario + constraints} block with
    \texttt{text-embedding-3-small}, takes the top-$K_{\text{text}}$ records
    by cosine similarity from the training bank, then re-ranks the
    survivors by L2 distance on the 20-dim numerical fingerprint of
    \texttt{past\_time}. With $k\!=\!1$ on this task both stages converge
    on the same family (\textsc{FullCausalContextImplicitEquationBivarLinSVAR})
    seen during training.

    \item \textbf{Reference compression.}
    The retrieved trajectory is compressed by \texttt{summarize\_trajectory} into the trainer's analytic spine: a \texttt{context$\to$forecast} explanation, together with the sequence of MCP tool calls and their truncated responses. 
    This summary is then injected before the test prompt as a \texttt{REFERENCES FROM PRIOR TRAINING} block, surfacing the transferable rule that the scenario's lagged linear parent effects should be handled by iterating the SVAR recursion over the known future $X_0$ schedule.
\end{enumerate}

\textbf{Execution Stage (Full Context).}
\begin{enumerate}
    \item \textbf{Context parsing.}
    The agent reads the explicit linear SVAR equation from the prompt
    ($X_1^{t} = 1.322\,X_0^{t\text{-}1} - 0.604\,X_1^{t\text{-}1} +
    0.926\,X_0^{t\text{-}2} + 0.763\,X_1^{t\text{-}2} -
    0.851\,X_0^{t\text{-}3} + 0.623\,X_1^{t\text{-}3}$) and the
    piece-wise constant schedule for $X_0$ over the 32-step horizon.

    \item \textbf{Tool-grounded historical inspection.}
    Guided by the retrieved spine, the agent invokes the MCP analysis
    server: \texttt{series\_overview()} $\to$ \texttt{channel\_stats("1")}
    $\to$ \texttt{compute\_acf("1", max\_lag=20)} $\to$ \texttt{channel\_values("1", 120, 128)},
    confirming the recent $X_1$ levels needed to seed the recursion.

    \item \textbf{Closed-form forecast.}
    With the equation, the recent lagged values, and the future $X_0$
    schedule all known, the agent unrolls the recursion deterministically,
    producing a 32-step forecast that tracks the ground-truth ramp from
    $0.035 \!\to\! 1.03$ almost exactly.
\end{enumerate}

\textbf{Counterfactual: Minimal Context, Same Series.}
On the matched \textsc{Minimal}-context variant, the prompt only states that
``parents for $X_1$ at lag $k\!\in\!\{1,2,3\}$ are $X_0, X_1$,'' while the
six SVAR coefficients are withheld. 
Given the same retrieved memory and tool budget, the agent cannot reconstruct the
recursion and instead falls back to extrapolating a small monotone drift in
$[0.036, 0.054]$, missing the $X_0$ step changes that drive the true ramp.

\vspace{2mm}
\textbf{Evaluation.}
On the full-context variant, \method{} achieves \textbf{sMAPE 0.44\%}
(MSE $\approx\!10^{-6}$), compared with \textbf{sMAPE 124\%}
(MSE $0.254$) on the minimal-context variant. 
This yields a $\sim\!280\times$ sMAPE gap for forecasting the
\emph{same} numerical series. 
Since the two variants share identical histories, retrieved memory, and tool
budget, the gap is explained by the $\sim\!400$-token difference in textual
context. 
This demonstrates that \method{}'s gains arise from context comprehension rather
than distributional pattern matching on the history alone. 
Figure~\ref{fig:cik_case_study_svar} overlays both forecasts on the shared past
and ground truth.

\end{tcolorbox}}
\caption{
Case study of \method{} on a pair of CiK SVAR tasks that share their
numerical history and ground truth but differ only in the textual
description of the data-generating process. With the explicit SVAR
equation in context, the agent iterates the recursion on the known $X_0$
step schedule and recovers the ground-truth ramp (sMAPE $0.44\%$). With
only a qualitative parent list, it extrapolates a smooth low-amplitude
drift and misses the ramp entirely (sMAPE $124\%$). The two annotated
boxes inside the panel reproduce the exact context delta between the two
prompts.
}
\label{fig:cik_case_study_svar}
\end{figure*}

\clearpage

\begin{figure*}[h]
\centering
\resizebox{\textwidth}{!}{
\begin{tcolorbox}[colback=gray!5!white, colframe=brown!55,
title=Case Study C.2: \method{} on TSRBench: How Retrieved Memory Steer the Decision,
boxrule=0.3mm, width=1.2\textwidth, arc=2mm, auto outer arc=true]

\textbf{Core Task.}
A multiple-choice causal-discovery question from TSRBench (\texttt{causal\_reasoning}, Elbe-River-Flood). The agent is shown $64$ days of runoff measurements from five river sensors and must pick the $5\!\times\!5$ binary adjacency matrix that captures the causal structure (columns = cause, rows = effect). Ground-truth answer is D: an upstream-to-downstream cascade in which small tributaries (rivers R77, R546, R1071) feed the large downstream river R176, which in turn feeds R175.

\vspace{2mm}

\textbf{Prompt:} See Appendix~\ref{app:prompts} for the full template.

\vspace{2mm}
\textbf{Visualization:} We provide Figure \ref{fig:tsrbench_case_study_causal} to better illustrate this case study.

\vspace{3mm}
\hrule
\vspace{3mm}

\textbf{Memory Retrieval Stage.}
\begin{enumerate}
    \item \textbf{Two-stage retrieval.}
    \method{} embeds the test prompt with \texttt{text-embedding-3-small} and selects the top-$K_{\text{text}}$ records from the training bank by cosine, then re-ranks the survivors by $L2$ distance on the fingerprint of the loaded series. Then \method retrieves $k\!=\!3$ records.

    \item \textbf{Rule extraction via \texttt{summarize\_trajectory}.} Each retrieved trajectory is compressed to a tool-call spine plus a \texttt{context\_to\_action} block written during training. One block of the retrieved action in memory reads verbatim: \emph{``the mechanism mapping is upstream-to-downstream edges.''} This is the transferable decision rule that primes the test agent.

    \item \textbf{Prompt injection.}
    The compressed references are concatenated as a \texttt{REFERENCES FROM PRIOR TRAINING} block in front of the test prompt, only analytic spine and decision rules.
\end{enumerate}

\textbf{Execution Stage.}
\begin{enumerate}
    \item \textbf{Targeted tool use.}
    Primed by the retrieved rule, the agent runs only the inspections needed to confirm direction: \texttt{list\_channels} $\to$ \texttt{series\_overview} $\to$ five \texttt{compute\_acf} probes, one per river. It skips the exhaustive per-channel statistics, peak-finding, and periodicity-detection that the no-memory agent runs through.

    \item \textbf{Mapping the rule onto the new river set.} The agent matches its own observations to the retrieved rule: the channels with mean flow $\sim\!660$--$680$ (R175, R176) form the downstream sinks; the three with mean flow $<\!4$ (R77, R546, R1071) are upstream tributaries. The rule then determines option D, in which the sink row R176 is densely populated with parents from the tributaries and the tributaries themselves are source rows.

    \item \textbf{Answer realisation.} The agent emits the final letter \texttt{<answer>D</answer>}, with a brief justification citing the retrieved cascade rule.
\end{enumerate}

\textbf{Counterfactual: Same Series, No Memory.}
Re-running the identical task at $k\!=\!0$ removes only the \texttt{REFERENCES} block. Without memory, the agent resorts to brute-force inspection, issuing $22$ tool calls across \texttt{channel\_stats}, \texttt{compute\_acf}, \texttt{find\_peaks}, and \texttt{detect\_periodicity} for every channel. It then selects option A, which encodes the physically reversed cascade (R176 caused by R175; R77, R546, and R1071 each caused by R176). Without the retrieved rule, the agent lacks an anchor for causal direction and defaults to a topology inconsistent with the observed lead-lag patterns.

\vspace{2mm}
\textbf{Evaluation.} On the same instance, \method{} changes from \textbf{wrong (A)} at $k\!=\!0$ (without memory) to \textbf{correct (D)} at $k\!=\!3$ (with memory), while reducing the tool-call budget from $22$ to $7$ ($3.1\!\times$ fewer calls). 
The retrieved memory provides a transferable \emph{decision rule}. 
This demonstrates memory improves reasoning by carrying compact rules that redirect the agent's search process, rather than by matching numerical histories alone.

\end{tcolorbox}}
\caption{Case study on a TSRBench river-flood causal-discovery instance}
\label{fig:tsrbench_case_study_causal_box}
\end{figure*}

\begin{figure*}[h]
\centering
\resizebox{\textwidth}{!}{
\begin{tcolorbox}[colback=gray!5!white, colframe=violet!45,
title=Case Study C.3: \method{} on TSAIA: Finance Tools Evolved from Experienced Trajectories,
boxrule=0.3mm, width=1.2\textwidth, arc=2mm, auto outer arc=true]

\textbf{Core Task.}
TSAIA's finance split presents the agent with portfolio and market time-series panels and asks it to answer quantitative multiple-choice questions. At the beginning, the agent's toolbox only contains generic time-series inspection tools. It does not include finance-specific primitives for portfolio evaluation or market-factor regression. 

\vspace{2mm}

\textbf{Prompt:} See Appendix~\ref{app:prompts} for the full template.

\vspace{3mm}
\hrule
\vspace{3mm}

\textbf{Tool Evolution.}
\begin{enumerate}
    \item \textbf{Routine extraction via \texttt{summarize\_trajectory}.}
    During training, three recurring analytical routines appear across successful trajectories: portfolio risk-adjusted return estimation, portfolio risk estimation, and market-factor regression. 
    Each successful trajectory is summarized into a compact routine description that records the input conventions, intermediate computations, and final decision rule. 
    These summaries reveal reusable computation patterns.

    \item \textbf{Evolving finance-specific tools.}
    From these recurring routines, \method{} evolves three finance-specific
    tool schemas and adds them to the agent's reusable toolbox:
    \vspace{-1mm}
    \begin{itemize}\setlength\itemsep{0pt}
      \item \texttt{portfolio\_sharpe(channels, weights, risk\_free, period\_per\_year)}: compute risk-adjusted return for each
        portfolio option and compare the returned ratios.
      \item \texttt{portfolio\_var(channels, weights, horizon=$H$, alpha=$\alpha$, method="parametric")}: compute portfolio risk using task metadata and compare the resulting losses across options.
      \item \texttt{capm\_regression(asset\_channel, market\_channel)}: estimate market-factor statistics $\{\alpha,\beta,r^{2}\}$ and select the option whose target statistic matches the returned value.
    \end{itemize}
\end{enumerate}
\textbf{Evaluation.}
Across the finance test split, evolved tools improve both accuracy and tool-use efficiency, especially when manual arithmetic is unreliable. On {MarketAB-beta} tasks, the average number of \texttt{capm\_regression} calls per task increases to $0.83$, and accuracy rises by $+16.7$. On {VaR} tasks, accuracy improves by $+12.0$, while the average tool budget drops from $6.3$ to $5.2$.

\end{tcolorbox}}
\caption{
Case study of \method{} on TSAIA's finance MC split. 
Starting from a generic time-series toolbox, \method{} observes recurring
successful training trajectories and evolves finance-specific tools for
portfolio evaluation and market-factor regression. 
At test time, these evolved tools replace brittle manual arithmetic with
direct executable routines, improving accuracy while reducing unnecessary
exploratory calls. 
}
\label{fig:tsaia_case_study_finance_tools}
\end{figure*}

\clearpage

\begin{figure*}[h]
\centering
\resizebox{\textwidth}{!}{
\begin{tcolorbox}[
colback=gray!5!white, colframe=blue!70,
title=Prompt Template B.1: General Skeleton,
boxrule=0.3mm, width=1.2\textwidth, arc=2mm, auto outer arc=true]

\textbf{General Prompt.} All \method prompts decompose the input into a fixed
sequence of optional slots: a retrieval prefix, a tool-use hint (when the
agent is wired to the MCP analysis server), a benchmark-specific header, the
task question, an inline data block (only when the data is not preloaded
into the MCP server), an option list (for multiple-choice tasks), an
output-format clause, and a training-mode suffix. Concrete builders fill or
omit each slot according to the task type.

\begin{quote}
\texttt{\{retrieval\_prefix\}}

\texttt{\{tools\_hint\}}

\texttt{\{header\}}

\medskip
\texttt{\{question\_or\_context\}}

\medskip
\texttt{\{inline\_data\_block\}}

\medskip
\texttt{\{options\}}

\medskip
\texttt{\{output\_format\}}

\medskip
\texttt{\{training\_suffix\}}
\end{quote}

\textbf{Template fields.}
\texttt{\{retrieval\_prefix\}} is the optional ``REFERENCES FROM PRIOR
TRAINING'' block (Fig.~\ref{fig:prompt_augments}, left), inserted at test
time when the memory bank returns nearest-neighbor trajectories.
\texttt{\{tools\_hint\}} reminds the agent which MCP analysis tools are
available and the recommended inspection pattern.
\texttt{\{header\}} carries benchmark-level metadata.
\texttt{\{question\_or\_context\}} contains the task statement (TSRBench /
TSAIA) or the textual context fields (CiK).
\texttt{\{inline\_data\_block\}} renders the partial numerical series as text when needed, and is
often suppressed, since the canonical data is preloaded
into the MCP server.
\texttt{\{options\}} is the lettered option list only for multiple-choice tasks.
\texttt{\{output\_format\}} pins the response schema.
\texttt{\{training\_suffix\}} is the optional extension
(Fig.~\ref{fig:prompt_augments}, right) that converts the prompt into a
trajectory-collection prompt during memory-bank construction.
\end{tcolorbox}}
\caption{
General prompt skeleton used in \method. A small set of optional slots
covers all benchmarks; each concrete builder fills the slots appropriate
to its task type and tool-availability regime.
}
\label{fig:prompt_template_general}
\end{figure*}

\begin{figure*}[h]
\centering
\resizebox{\textwidth}{!}{
\begin{tcolorbox}[
colback=gray!5!white, colframe=blue!70,
title=Prompt Template B.2: TSRBench,
boxrule=0.3mm, width=1.2\textwidth, arc=2mm, auto outer arc=true]

\textbf{TSRBench Prompt.} TSRBench questions come in three surface forms:
(i)~``perception'' questions with an inline \texttt{<ts><ts/>} placeholder
where the series should be substituted, (ii)~multiple-choice questions with
a \texttt{choices} field, and (iii)~free-form questions. The same template
handles all three by toggling the rendering of the series block and the
options block; the answer schema is always a single
\texttt{<answer>...</answer>} tag.

\begin{quote}
\texttt{Domain: \{domain\}} \\
\texttt{Task type: \{task\}} \\
\texttt{Series names: \{names\}}

\medskip
The numerical time series for this question is already loaded into your
analysis tool server. Inspect it with the available tools before answering.
A typical pattern: (1) \texttt{list\_channels()} / \texttt{series\_overview()};
(2) \texttt{channel\_stats} / \texttt{channel\_values} / \texttt{compute\_acf} /
\texttt{detect\_periodicity} / \texttt{find\_peaks} on individual channels.
Do NOT guess values you haven't retrieved through tools.

\medskip
\texttt{Question:} \\
\texttt{\{question\}}

\medskip
\textit{(MC subtasks only)}\\
\texttt{Options:} \\
\texttt{A. \{option\_A\}} \\
\texttt{B. \{option\_B\}} \\
\texttt{...}

\medskip
Reason carefully, then output your final answer wrapped exactly like this on
its own line: \texttt{<answer>X</answer>}. 
\end{quote}
\end{tcolorbox}}
\caption{
Specific TSRBench prompt template. For Context-is-Key and TSAIA prompt template, please refer to our code implementation. 
}
\label{fig:prompt_tsrbench}
\end{figure*}

\begin{figure*}[h]
\centering
\resizebox{\textwidth}{!}{
\begin{tcolorbox}[
colback=gray!5!white, colframe=blue!70,
title=Prompt Template B.3: Memory-Retrieval Prefix and Training-Mode Suffix,
boxrule=0.3mm, width=1.2\textwidth, arc=2mm, auto outer arc=true]

\textbf{Augmentation blocks.} Two cross-cutting blocks compose with every
builder above. The \textit{retrieval prefix} is inserted at test time when
the memory bank returns nearest-neighbor trajectories; the \textit{training
suffix} is appended at memory-bank-construction time so the agent produces
a justifying analytic trace plus a transferable
\texttt{<context\_to\_action>} explanation.

\begin{quote}
\textbf{Retrieval prefix (test time, when $k>0$ neighbors are retrieved):}

\texttt{=== REFERENCES FROM PRIOR TRAINING ===} \\
Below are similar prior tasks (same task family where possible) and how they
were solved, including which tools were called, the key numbers returned, and the
final answer. Use them as a guide for the analytic process. The correct
answer for the current task may differ; do not copy blindly.

\medskip
\texttt{Reference 1:} \\
\texttt{[family=<family\_key>]} \\
\texttt{\ \ <trainer's transferable explanation>} \\
\texttt{\ \ tool\_name(args) $\to$ <truncated\_response>} \\
\texttt{\ \ tool\_name(args) $\to$ <truncated\_response>} \\
\texttt{\ \ ...}

\medskip
\texttt{Reference 2: ...}

\medskip
\texttt{=== END REFERENCES ===}

\medskip\hrule\medskip

\textbf{Training-mode suffix (memory-bank construction):}

\texttt{TRAINING MODE}

\medskip
Do TWO things before emitting your final answer:

1. Inspect the series with whatever tools are available and walk through the
reasoning step by step.

2. Output a \texttt{<context\_to\_action>...</context\_to\_action>} block
($\leq$3 sentences) that explicitly states:
\begin{itemize}
\item which sentences in the context (background / scenario / constraints /
question / options) drive the answer's shape, AND
\item the rule that maps those sentences to that shape (e.g.\ ``scenario says
`heat wave for 2 hours' $\to$ ground truth has a 4$\times$ spike at the
stated start $\to$ because air-conditioning load scales with cooling
demand'').
\end{itemize}

Make this block transferable, NOT series-specific. Future test-time
agents will read it to map their own context, so avoid statements like ``the mean is 850'' and prefer statements like ``scenario X implies pattern Y because of mechanism Z''.

\medskip
Then produce your final answer in the format requested above.

\end{quote}
\end{tcolorbox}}
\caption{
The retrieval prefix (top) surfaces the
analytic spine of nearest-neighbor trajectories from the memory bank.
The training-mode suffix (bottom) is applied during the memory bank
construction, so the trainer agent produces the transferable explanation
block that the retrieval prefix later surfaces.
}
\label{fig:prompt_augments}
\end{figure*}